\def\eqref#1{equation~\ref{#1}}
\def\1{\bm{1}}
\DeclareMathAlphabet{\mathsfit}{\encodingdefault}{\sfdefault}{m}{sl}
\SetMathAlphabet{\mathsfit}{bold}{\encodingdefault}{\sfdefault}{bx}{n}
\newtheorem{theorem}{Theorem}
\title{Post-Training Quantization via Residual Truncation and Zero Suppression for Diffusion Models}
\author{Donghoon Kim \\
Department of Artificial Intelligence\\
Kyung Hee University\\
Yongin-si, Gyeonggi-do, South Korea \\
\texttt{dhkim2810@khu.ac.kr} \\
\And
Dongyoung Lee \\
Department of Electrical Engineering \\
Kyung Hee University \\
Yongin-si, Gyeonggi-do, South Korea \\
\texttt{dylee@khu.ac.kr} \\
\And
Ik Joon Chang \\
Department of Electrical Engineering \\
Kyung Hee University \\
Yongin-si, Gyeonggi-do, South Korea \\
\texttt{ichang@khu.ac.kr} \\
\And
Sung-Ho Bae \\
Department of Computer Science \\
Yongin-si, Gyeonggi-do, South Korea \\
\texttt{shbae@khu.ac.kr} \\
}
\begin{document}

\maketitle

\begin{figure}[h!]
    \centering
    \includegraphics[width=\linewidth]{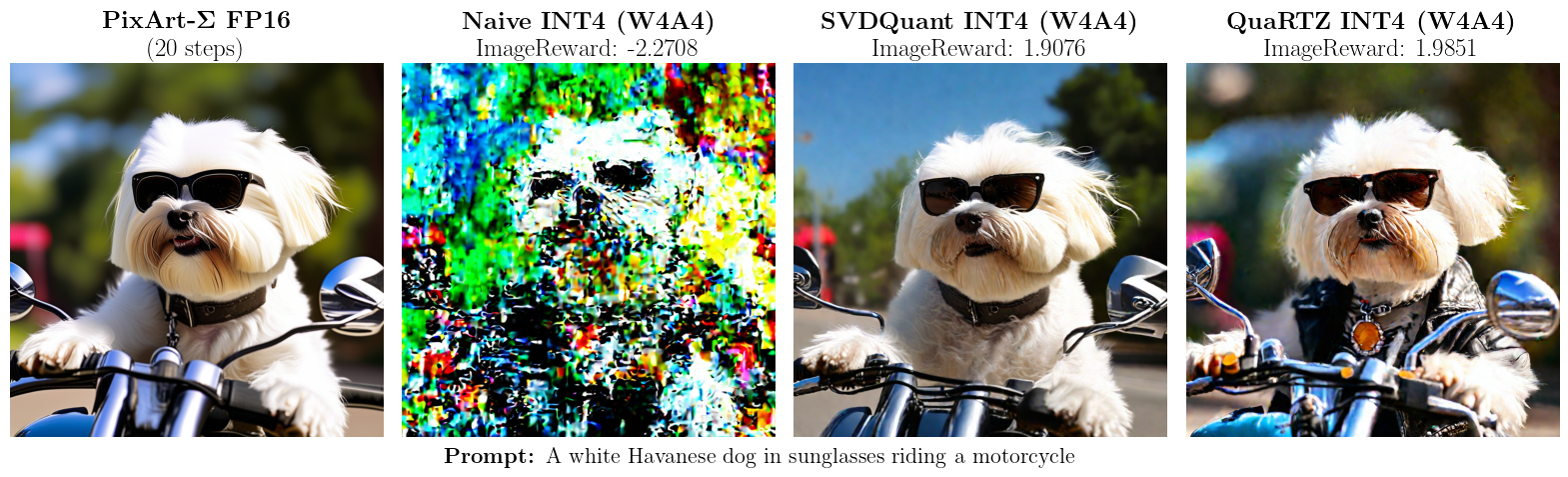}
    \caption{Qualitative comparison on PixArt-$\Sigma$ using different quantization settings and our method.}
\end{figure}

\begin{abstract}
Diffusion models achieve high-quality image generation but face deployment challenges due to their high computational requirements. 
Although 8-bit outlier-aware Post-Training Quantization (PTQ) matches full-precision performance, extending PTQ to 4 bits remains challenging. 
Larger step sizes in 4-bit quantization amplify rounding errors in dense, low-magnitude activations, leading to the loss of fine-grained textures. 
We hypothesize that not only outliers but also small activations are critical for texture fidelity.
To this end, we propose Quantization via Residual Truncation and Zero Suppression (QuaRTZ), a 4-bit PTQ scheme for diffusion models. 
QuaRTZ applies 8-bit min–max quantization for outlier handling and compresses to 4 bits via leading-zero suppression to retain LSBs, thereby preserving texture details. 
Our approach reduces rounding errors and improves quantization efficiency by balancing outlier preservation and LSB precision. 
Both theoretical derivations and empirical evaluations demonstrate the generalizability of QuaRTZ across diverse activation distributions. 
Notably, 4-bit QuaRTZ achieves an FID of 6.98 on FLUX.1-schnell, outperforming SVDQuant that requires auxiliary FP16 branches. 
\end{abstract}

\section{Introduction}

Diffusion models have emerged as the state-of-the-art in generative modeling, achieving remarkable performance in text-to-image synthesis, super-resolution, and inpainting \citep{ho2020denoising, rombach2022high, podell2023sdxl, peebles2023dit, flux}.
However, the iterative refinement process is computationally expensive, which limits its deployment in latency- and resource-constrained environments such as mobile devices, on-device AI assistants, or large-scale cloud serving with strict throughput demands.

Low-bit quantization reduces memory footprint, bandwidth demand, and arithmetic cost, while enabling efficient execution on modern accelerators \citep{han2015deep}.
Post-training quantization (PTQ) is particularly attractive for diffusion models where fine-tuning is costly, as it requires neither retraining nor access to the original dataset \citep{nagel2020up, li2021brecq}.
While 8-bit and even 6-bit PTQ for diffusion models have proven to be effective \citep{li2023q, huang2024tfmq, ryu2025dgq}, pushing to 4-bit precision (W4A4) remains challenging due to error propagation; quantization noise accumulates over hundreds of timesteps, degrading image texture quality.

Existing PTQ methods address this issue by focusing on outlier preservation through temporal alignment \citep{huang2024tfmq, he2023ptqd, chen2024stepbaq} or condition-aware scaling \citep{ryu2025dgq, li2023q}; however, this overlooks the dominant error source at extremely low precision — the loss of Least Significant Bits (LSBs).
Diffusion models refine subtle variations over many iterations \citep{ho2020denoising, peebles2023dit}, making them especially sensitive to rounding errors near zero.
With activations densely concentrated around small values, truncating LSBs discards critical fine-grained information and leads to collapsed generations.
Tackling the conflicting challenge of preserving both outliers and LSBs is essential to making 4-bit quantization practical for diffusion models.

To this end, we propose Quantization via Residual Truncation and Zero suppression (QuaRTZ), a novel two-stage 4-bit quantization scheme for diffusion models.
The first stage minimizes rounding error through 8-bit min-max uniform quantization, resulting in a fine-grained integer representation of the original value.
The second stage compresses integer representations to a targeted 4 bits using a Leading Zero Suppression (LZS) kernel, which preserves a salient 4-bit representation beginning from the top-most activated bit.
This process allows for high entropy of the compressed representation, where the magnitude of the outliers is retained and LSBs are preserved without information loss. 
Our two-stage design simultaneously protects outliers and LSBs, directly addressing the two dominant sources of error at low precision.

Our contributions are as follows:
\begin{itemize}
\itemsep0em
\item We propose QuaRTZ, a novel 4-bit quantization scheme that successfully balances outlier preservation and LSB precision. 
\item Our QuaRTZ scheme demonstrates state-of-the-art performance in various diffusion architectures, including UNet and DiT backbones. Notably, our W4A4-quantized model outperforms SVDQuant counterparts even without an error compensation module, enabling 3.8x reduction in memory footprint compared to the 16-bit baseline.
\item We illustrate the effectiveness of QuaRTZ in theoretical, information, empirical and hardware perspectives in depth, providing core reasoning behind the insight of preserving LSBs.
\end{itemize}

\section{Related Works}
\label{sec:related_works}
Diffusion models have established state-of-the-art performance in a wide range of image generation tasks, including unconditional generation \citep{ho2020denoising, rombach2022high} and text-to-image synthesis \citep{rombach2022high, podell2023sdxl, sauer2024adversarial, chen2024pixart, flux}.
Recent extensions integrate transformer backbones, further scaling model capacity and controllability \citep{peebles2023dit, chen2024pixart, flux}.
Despite these advances, the inherently iterative denoising process results in slow inference, posing a significant barrier to deployment in latency- or resource-constrained environments.

Quantization has emerged as a promising direction to accelerate diffusion models by reducing memory footprint and enabling efficient low-precision arithmetic.
Two main paradigms exist: Quantization-Aware Training (QAT), which jointly learns task objectives and quantization parameters \citep{esser2019learned}, and Post-Training Quantization (PTQ), which applies quantization to pretrained models without retraining.
While QAT generally achieves higher accuracy at low bit-widths, it requires complete training data and considerable computational resources.
PTQ, in contrast, does not require complete data or finetuning, making it a practical path for scaling large generative models where retraining is often infeasible.

Consequently, the main focus of recent PTQ research has been primarily on developing methods for handling outliers. 
Approaches such as PTQ4DM \citep{shang2023ptqdm}, Q-Diffusion \citep{li2023q}, and TFQM-DM \citep{huang2024tfmq} mitigate large-magnitude errors through temporal alignment, calibration strategies, or condition-aware scaling \citep{li2023q, he2023ptqd, chen2024stepbaq}.
Building on these foundations, DGQ \citep{ryu2025dgq} achieved W4A6 quantization for text-to-image models, and SVDQuant \citep{li2024svdquant} reached 4-bit precision by introducing 16-bit LoRA \citep{hu2022lora} branches to absorb quantization error.
However, prior work largely fails to maintain image quality below 6-bit precision.
While SVDQuant is successful at maintaining image quality, it limits the efficiency gains of full low-bit quantization because it relies on auxiliary FP16 branches. 
These branches introduce additional parameters, modify the original architecture, and require mixed-precision fusion, which increases implementation complexity and reduces deployment efficiency.

In this work, we propose a 4-bit post-training quantization method for diffusion models that preserves fine-grained texture quality while improving efficiency without auxiliary, higher-precision branches.
We focus on both outliers and LSBs simultaneously, departing from prior methods that emphasize only outlier preservation.

\section{Quantization via Residual Truncation and Zero suppression}
\label{sec:proposal}

\begin{figure}[thbp]
  \centering
    \includegraphics[width=\linewidth]{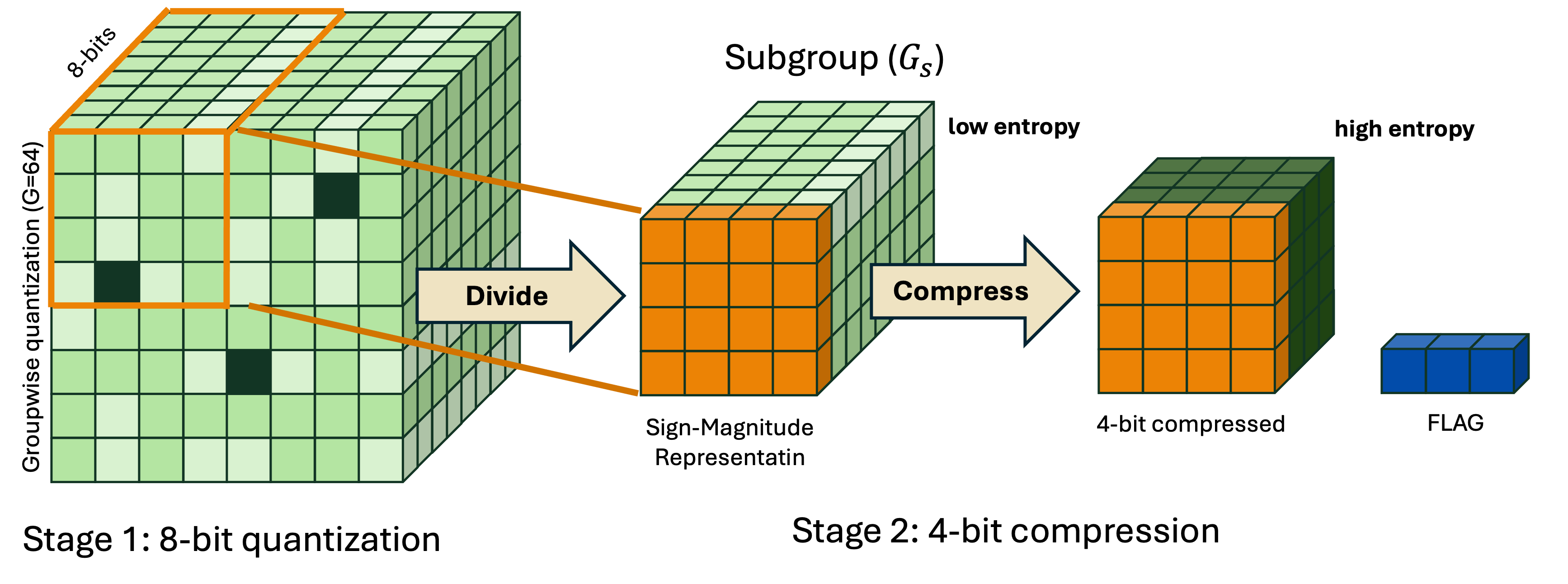}
    \caption{Illustration of the proposed two-stage quantization. 
Stage 1 applies 8-bit integer quantization to capture outliers with small step size, 
and Stage 2 compresses activations to 4 bits via subgroup-based leading-zero suppression. 
The green color indicates entropy (from low to high across values), 
while the blue block represents the FLAG bits assigned per subgroup.}
  \label{fig:method}
\end{figure}

We hypothesize that both outliers and LSBs are crucial in diffusion models. 
Outliers drive large corrections and the generation of salient features, while LSBs capture fine variations that shape textures and smooth gradients. 
Our two-stage quantization scheme preserves both outliers and LSBs with minimal information loss.

We target outliers and LSBs in two stages: 8-bit quantization and 4-bit quantization as illustrated in Figure \ref{fig:method}.
In the first stage, we apply 8-bit integer quantization, that captures the sparse outlier distribution while keeping rounding error small due to the fine step size.
The 8-bit representation spans the full dynamic range, leaving relatively few outliers and concentrating most information in the LSBs.

Let $x$ and $\hat{x}$ denote the original and quantized activations, respectively.
The quantization process is defined as:
\begin{equation}
\label{eq:quantize}
\hat{x} = \text{clamp} \left(
    \Bigl\lfloor \tfrac{x}{s} \Bigr\rceil + z, ; -127, ; 127
    \right), \quad
    s = \tfrac{x_{\max} - x_{\min}}{255}, \quad
    z = \Bigl\lfloor -\tfrac{x_{\min}}{s} \Bigr\rceil ,
\end{equation}
where $s$ and $z$ denote the scaling factor and zero point, and $x_{\min}, x_{\max}$ are the minimum and maximum activation values.

In the second stage, we exploit the redundancy of 8-bit codes using Leading Zero Suppression (LZS) to compress them into 4 bits.
The key idea is to discard unused high-order zeros while preserving both the magnitude of outliers and the precision of LSBs.
Each 8-bit signed integer is reformatted into a 1-bit sign $sgn$ and a 7-bit magnitude $mag$, and values are grouped into $K$ blocks of size $G_s$ (e.g., 16 or 32 elements).
Within each block, we compute a shared flag that indicates the most significant active bit.
Specifically, the flag is derived from the number of leading zeros, 
computed via the CUDA intrinsic \texttt{clz} function 
on the bitwise OR of all magnitudes in the subgroup \citep{nvidia:cuda_math} as:

\begin{equation}
\label{eq:lzs}
    \text{FLAG} = \text{max}\left( 29 - \texttt{clz}(m), 0 \right), ~~~~~~~ \text{FLAG} \in \{0,1,2,3,4\}
\end{equation}

where $m$ denotes the aggregated magnitude.
The counting leading zero (\texttt{clz}) function counts consecutive high-order zero bits in a 32-bit integer, which returns a value from 0 to 32.
Thus, we subtract $3$ from $32$ to preserve the bottom 3 bits when $mag$ is smaller than $8$.

A right-shift by \texttt{FLAG} bits is then applied to all values in the block, yielding a compact signed 4-bit representation.
This process retains salient bits and suppresses redundant high-order zeros, achieving compression with minimal information loss. 
During inference, since each group has been shifted equally, the output of Matrix Multiply-Accumulate (MMA) can be adjusted by the \texttt{FLAG} bit \texttt{right-shift} operation.

\section{Analysis of QuaRTZ}

\begin{figure}[htbp]
  \centering
  \includegraphics[width=\linewidth]{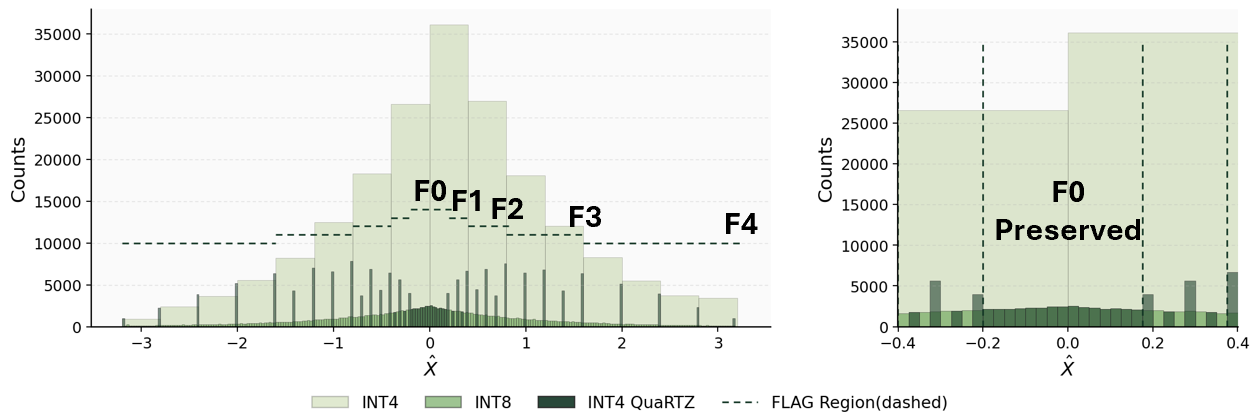}
  \caption{Compared to naïve INT4 quantization, QuaRTZ avoids severe rounding errors in dense low-magnitude regions. 
The histogram is partitioned into FLAG regions (F0–F4): F0 denotes the preserved fine-grained region around zero, 
while F1–F4 correspond to progressively larger magnitude ranges captured via FLAG-based shifts. 
Despite compression, the magnitude of outliers is retained similarly to INT4 quantization.
}
  \label{fig:samples_analysis}
\end{figure}

We analyze QuaRTZ from multiple perspectives, including theoretical distortion bounds, bit-wise entropy, empirical evaluations, and latency.

\subsection{Distortion Analysis}
\label{sec:method_distortion_analysis}

We show that our two-stage quantization scheme provides a lower upper bound on quantization error compared to direct 4-bit min-max uniform quantization.
Detailed derivation of our inequality is described in Appendix \ref{appendix:error_derivation}.

\begin{theorem}[Error Bound for QuaRTZ]
Let $X \in \mathbb{R}$ with density $p(x)$.
Denote the quantization error of direct 4-bit uniform quantization as $E_q^4$, and the error of 8-bit quantization followed by LZS compression as $E_{\text{total}}$.
If less than half of the probability mass lies in high-index bins ($|j| \ge 8$), then

\begin{equation}
\label{eq:final_error_inequality}
    E_{\text{total}} < E_q^4.
\end{equation}

\end{theorem}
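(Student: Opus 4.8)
The plan is to decompose the total error $E_{\text{total}}$ of the two-stage scheme into two contributions: the rounding error $E_q^8$ incurred by the initial 8-bit min--max quantization, and the additional truncation error $E_{\text{LZS}}$ introduced when leading-zero suppression right-shifts each subgroup by its FLAG value. First I would fix notation: write $s_4 = (x_{\max}-x_{\min})/15$ and $s_8 = (x_{\max}-x_{\min})/255$ for the step sizes of the 4-bit and 8-bit grids, so that $s_8 = s_4/17$ (modulo the usual $\pm 1$ endpoint conventions, which I would absorb into constants). The 8-bit rounding error is uniformly bounded by $s_8/2$, hence its mean-square contribution is at most $s_8^2/12 = s_4^2/(12\cdot 17^2)$, which is already two orders of magnitude below $E_q^4 \approx s_4^2/12$. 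The entire difficulty is therefore in controlling $E_{\text{LZS}}$: after the shift by $k$ bits, the effective step size for a value landing in FLAG-region $F_k$ is $2^k s_8$, so the worst case is $k=4$, giving step size $16 s_8 < s_4$. Thus \emph{within any single subgroup} the LZS step size never exceeds the naive 4-bit step size, and the per-element squared error is bounded by $(2^k s_8)^2/12 \le (16 s_8)^2/12 = (16/17)^2 \cdot s_4^2/12 < s_4^2/12$.

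The subtlety the hypothesis addresses is that the FLAG is \emph{shared} across a subgroup: a small-magnitude element sitting in a subgroup that also contains an outlier gets shifted by the large FLAG dictated by that outlier, so its local step size is inflated. The key step is therefore to split the expectation over $x$ according to which FLAG-region the aggregated subgroup magnitude falls in, and to show that the probability mass assigned to the high-FLAG regions is small enough that the inflated errors there cannot push $E_{\text{total}}$ above $E_q^4$. Concretely, I would bound
\begin{equation}
\label{eq:decomp}
E_{\text{total}} \;\le\; E_q^8 \;+\; \sum_{k=0}^{4} \Pr[\text{region } F_k]\cdot \frac{(2^k s_8)^2}{12},
\end{equation}
then observe that regions $F_0$ through $F_3$ correspond precisely to magnitudes $|j| < 8$ in the 8-bit code (since FLAG $=\max(29-\texttt{clz}(m),0)$ only reaches $4$ once $m \ge 2^{11}$, i.e. once the top bits are active), while the hypothesis "less than half the probability mass lies in high-index bins $|j|\ge 8$" is exactly the statement $\Pr[F_4] < 1/2$. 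Substituting $s_8 = s_4/17$ and $\Pr[F_4] < 1/2$ into \eqref{eq:decomp} yields a bound strictly below $\tfrac{1}{2}\cdot\tfrac{256}{289}\cdot\tfrac{s_4^2}{12} + E_q^8 < \tfrac{s_4^2}{12} \approx E_q^4$, after checking the small-region terms contribute a negligible remainder.

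I expect the main obstacle to be the bookkeeping around the shared-FLAG mechanism and the endpoint conventions. One must be careful that the relevant probability for region $F_k$ is the probability that the \emph{subgroup-OR} of magnitudes activates bit $k$, not the probability that an individual element does; under an i.i.d.\ or mild-dependence assumption on activations within a subgroup this OR event is still controlled by the per-element tail, but making the reduction clean — and matching it to the theorem's phrasing in terms of per-element bins $|j|\ge 8$ — requires either an independence assumption or a union bound that I would state explicitly. A secondary technical point is that $E_q^4$ for genuine min--max 4-bit quantization is not exactly $s_4^2/12$ when $p(x)$ has mass concentrated near a grid point or at the clipping boundary; I would handle this by using the generic lower bound $E_q^4 \ge c\, s_4^2$ for a distribution-independent constant $c$ whenever the mass is not degenerate, or alternatively phrase the conclusion as a comparison of the \emph{upper bounds} each scheme admits, which is what the appendix derivation presumably does. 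The rest is routine: plug in $17^2 = 289$, verify $\tfrac{1}{2}\cdot\tfrac{256}{289} + (\text{8-bit term}) < 1$, and collect constants.
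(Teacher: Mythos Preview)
Your proposal departs from the paper's argument in two ways that break the proof rather than merely vary it. First, the paper works with the \emph{expected absolute} (L1) error and the worst-case per-bin bound $E_q^n \le s_n/2$, not with MSE. In that setting the relation is $s_4 = 16\,s_8$ (seven magnitude bits versus three), so the sufficient condition becomes $\mathbb{E}[E_{\text{LZS}}] < (s_4 - s_8)/2 = 7.5\,s_8$, and the worst-case LZS truncation is $15\,s_8$ --- which is \emph{larger} than the target, so the hypothesis is genuinely needed. By switching to MSE with $s_8 = s_4/17$, you have arranged that even the maximal LZS step size $16\,s_8 < s_4$, so your bound $(16 s_8)^2/12 < s_4^2/12$ holds unconditionally and the hypothesis plays no role. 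That is a signal the framework is mismatched, not a shortcut.

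Second, your FLAG--bin correspondence is inverted. The magnitude $m$ is a 7-bit integer ($m \le 127$), so your claim that ``FLAG reaches $4$ once $m \ge 2^{11}$'' cannot be right; in fact FLAG $= \max(H(m)-3,0)$ where $H(m)$ is the bit-length, so FLAG $= 0$ exactly when $m \le 7$, and FLAG $\in\{1,2,3,4\}$ as $m$ ranges over $[8,15],[16,31],[32,63],[64,127]$. Hence ``$|j| \ge 8$'' corresponds to $F_1 \cup F_2 \cup F_3 \cup F_4$ (all truncating regions), not to $F_4$ alone. The paper's argument then runs: for any element with $m \ge 8$ the truncation loss is at most $(2^{H(m)-3}-1)s_8 \le 15\,s_8$, so $\mathbb{E}[E_{\text{LZS}}] \le 15\,s_8\cdot \mathbb{P}(m \ge 8)$, and $\mathbb{P}(m \ge 8) < \tfrac12$ gives $\mathbb{E}[E_{\text{LZS}}] < 7.5\,s_8$ as required. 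Note also that the paper treats the shift as determined \emph{per element} by its own bit-length; it does not model the shared subgroup FLAG at all, so the union-bound machinery you propose is not part of the intended proof and should be dropped.
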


\begin{proof}[Sketch of Proof]
For uniform $n$-bit quantization, the error is bounded by $E_q^n \le s_n/2$. 
With $s_4 = 16 s_8$, the sufficient condition becomes $E_{\text{LZS}} < 7.5\, s_8$. 

The expected LZS error is 

\begin{equation}
\label{eq:lzs_error}
    \mathbb{E}[E_{\text{LZS}}] = s_8 \sum_{k=4}^7 P_k (2^{k-3}-1),
\end{equation}

where $P_k = \mathbb{P}(H=k)$ denotes the density of $k$-th bit in $X$.
In the worst case, where every value with 4 truncated bits, $E_{\text{LZS}} \le 15 s_8 \cdot  \mathbb{P}(|J| \ge 8)$.
If $\mathbb{P}(|J| \ge 8) < 0.5$, then the condition is satisfied. 
\end{proof}

\subsection{Bit-wise Entropy Analysis}
\label{sec:method_emph_analysis}

\begin{figure}[htbp]
  \centering
  \includegraphics[width=\linewidth]{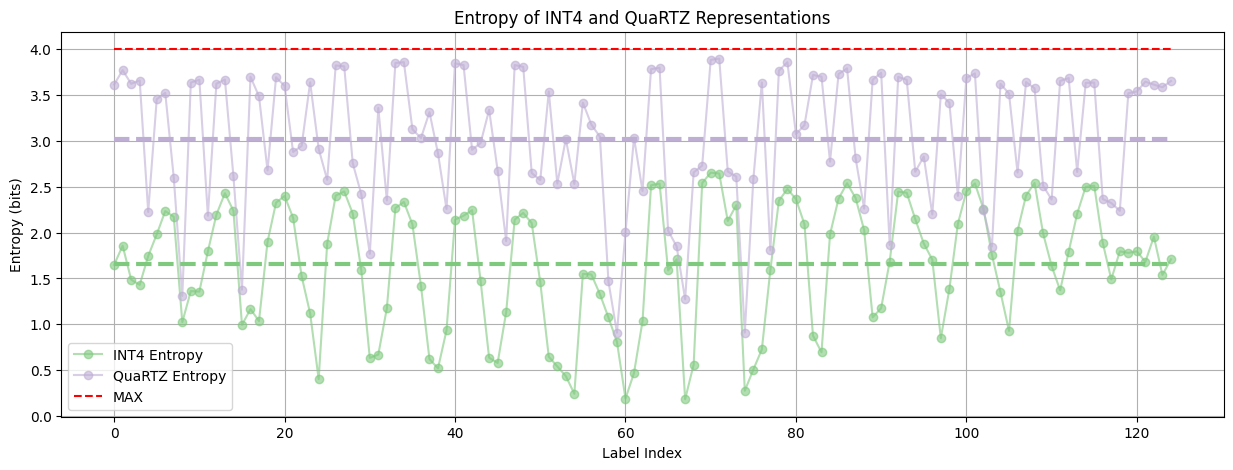}
  \caption{Entropy analysis demonstrates that our method exhibits higher entropy at every layer compared to naïve INT4.}
  \label{fig:entropy_analysis}
\end{figure}

We present the entropy of INT4 and QuaRTZ representations at each layer in Figure \ref{fig:entropy_analysis}.
For every layer, our method has higher entropy compared to the INT4 min-max uniform quantization method.
Higher entropy indicates that all four bits are activated with nearly equal frequency.
The results show that QuaRTZ constructs compact and informative 4-bit representations by removing the redundancy of 8-bit codes.
Although increasing representation entropy was not an explicit design goal, this improvement is a direct result of our primary motivation of exploiting the redundancies of 8-bit values.

\subsection{Empirical Analysis}
\label{sec:method_emph_analysis}

Effectiveness of QuaRTZ is further supported by empirical analysis on random values, as shown in Figure \ref{fig:samples_analysis}.
The results demonstrate that our method preserves LSBs effectively compared to direct 4-bit integer quantization, which suffers from severe rounding errors.
Additionally, the magnitude of outliers is well retained in both cases, supporting our claim that LSBs also play a critical role in generating high-quality images. 

\subsection{Latency Analysis}
\label{sec:method_emph_analysis}
We show that our method is computationally efficient and hardware-friendly with a comprehensive analysis of the 4-bit QuaRTZ kernel on various layer size in Table~\ref{tab:latency}.
On a A100 GPU with native s4 Tensor Core MMA, GEMM executes as \texttt{s4×s4→s32}, and the per-group power-of-two scale is applied as an integer left shift on the \texttt{s32} accumulators inside the $K$-loop, adding only $\sim 1/G_s$ extra integer ops per slice with no additional global memory traffic—typically negligible relative to MMA throughput.
Meanwhile, activation (A-side) traffic is nearly halved versus int8: activations are stored as packed \texttt{s4}, and the only overhead is a single flag byte per subgroup (i.e., $1/G_s$ bytes per element), which stays cache-resident.
We also report the latency of various attention settings, power, and area of the proposed kernel in Appendix~\ref{appendix:hardware}.

\begin{table}[t]
  \centering
    \caption{Latency comparison of 4-bit QuaRTZ kernel and PyTorch baseline across linear layers.}
    \begin{tabular}{lccc}
      \toprule
      Layer size & Group & PyTorch (ms) & QuaRTZ (ms) \\
      \midrule
      \multirow{3}{*}{4096$\times$4096} & g8  & 5.410 & 0.189 \\
                                        & g16 & 5.057 & 0.184 \\
                                        & g32 & 5.017 & 0.176 \\
      \midrule
      \multirow{3}{*}{5120$\times$5120} & g8  & 7.678 & 0.383 \\
                                        & g16 & 6.965 & 0.268 \\
                                        & g32 & 6.389 & 0.239 \\
      \midrule
      \multirow{3}{*}{8192$\times$8192} & g8  & 18.74 & 0.468 \\
                                        & g16 & 16.50 & 0.327 \\
                                        & g32 & 15.01 & 0.313 \\
      \bottomrule
    \end{tabular}
  \label{tab:latency}
\end{table}

\section{Experiments and Analysis}
\label{sec:experiments}

\subsection{Setup}
\paragraph{Model and Dataset}
We evaluate our proposed scheme using UNet-based architectures, including LDM \citep{rombach2022high}, Stable Diffusion (SD) v1.4 \citep{rombach2022high}, SDXL-Turbo \citep{sauer2024adversarial}, and DiT-based architectures, such as PixArt-$\Sigma$ \citep{chen2024pixart} and FLUX.1-schnell \citep{flux}.
Experiments are conducted on widely used LSUN-Bedrooms, LSUN-Churches \citep{yu2015lsun}, CelebA-HQ \citep{karras2017progressive}, FFHQ \citep{karras2019style}, MS-COCO \citep{lin2014microsoft}, MJHQ-30K \citep{li2024playground}, and summarized Densely CAptioned Images (sDCI) dataset \citep{urbanek2024picture}.

\paragraph{Quantization Setup}
We follow prior works \citep{li2023q, huang2024tfmq, li2024svdquant} for calibration and quantization settings for comparison.
For unconditional image generation, we sample 256 samples per timestep while 128 prompts are sampled from COCO Captions 2017 \citep{lin2014microsoft} for text-to-image generation.
Generalization performance is evaluated using 5K randomly sampled prompts from the MJHQ-30K and sDCI dataset.
Additional details are included in \ref{appendix:quant_detail}.

\paragraph{Metrics}
We assess model performance using Fréchet Inception Distance (FID) \citep{fid}, CLIP Score \citep{hessel2021clipscore}, and ImageReward (IR) \citep{xu2023imagereward}.
LDM models are evaluated with FID, while FID, CLIP Score, and IR are used for other models.
We  also use LPIPS \citep{zhang2018unreasonable} and Peak Signal Noise Ratio (PSNR) to measure perceptual similarity and numerical similarity of DiT-based models.
Results from prior literature are either taken directly from original papers or reproduced under comparable conditions.
We generate 30K samples for evaluating LDM models, while 5K samples are used for the rest.
All experiments are conducted on a single A100 GPU using PyTorch.

\paragraph{Baselines}
We compare our work with prior state-of-the-art quantization techniques with TFMQ-DM \citep{huang2024tfmq}, DGQ \citep{ryu2025dgq}, and SVDQuant \citep{li2024svdquant}.

The notation W$x$A$y$ indicates that $x$ bits and $y$ bits are used for weight and activation quantization, respectively.
Additional experimental details are provided in Appendix F.

\subsection{Main Results}

\begin{figure}[htbp]
  \centering
  \begin{subfigure}{\linewidth}
    \centering
    \includegraphics[width=\linewidth]{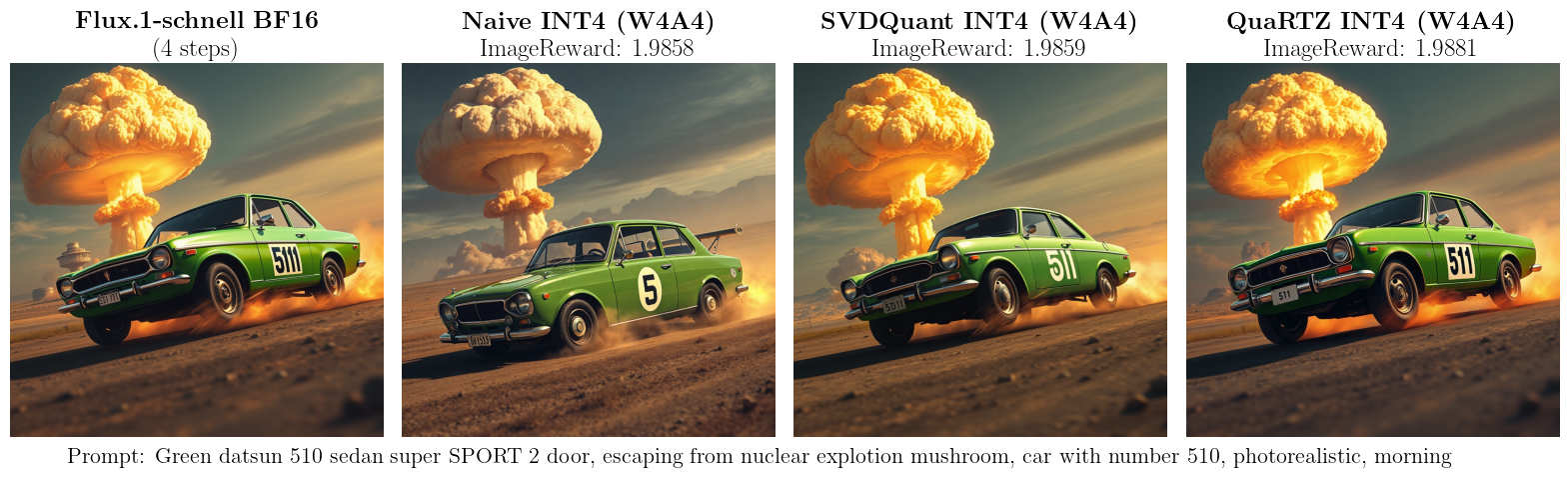}
    \label{fig:first}
  \end{subfigure}
  \begin{subfigure}{\linewidth}
    \centering
    \includegraphics[width=\linewidth]{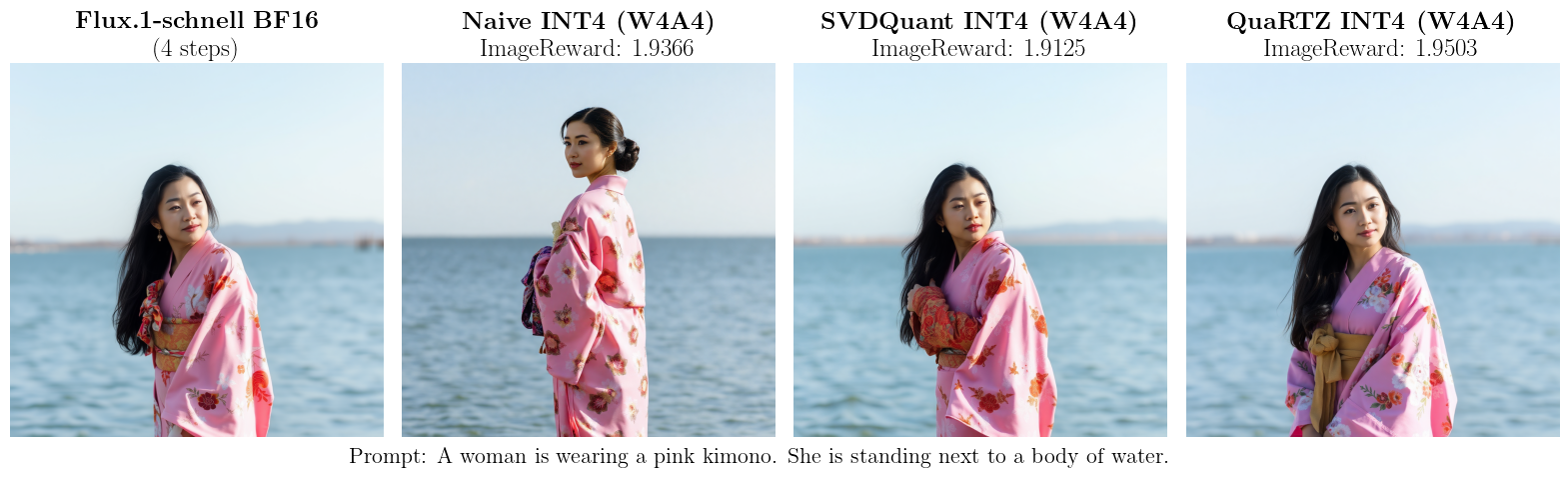}
    \label{fig:second}
  \end{subfigure}
  \caption{Generated images from different quantization methods on FLUX.1-schnell model on MJHQ dataset (up) and DCI dataset (down).}
  \label{fig:qualitative}
\end{figure}

\begin{table}[htbp]
  \centering
  \caption{FID scores of unconditional image generation using LDM-4 on LSUN-Bedrooms $256 \times 256$, FFHQ $256 \times 256$, and CelebA-HQ $256 \times 256$, and LDM-8 on LSUN-Churches $256 \times 256$. $^\dagger$ indicates scores from running open-source codes.}
  \resizebox{0.8\textwidth}{!}{
  \begin{tabular}{@{}l c c c c c c c@{}}
    \toprule
    \textbf{Methods} & \textbf{Bits (W$x$A$y$)} & \textbf{LSUN-Beds} & \textbf{LSUN-Churches} & \textbf{CelebA-HQ} & \textbf{FFHQ} \\
    \midrule
    Full Prec.       & W32A32 & 3.47 & 4.34 & 20.54 & 9.67 \\
    \midrule
    \multirow{2}{*}{TFMQ-DM$^\dagger$} & W4A8   & 6.2 & 13.94 & 21.39 & 10.34 \\
    \cmidrule(lr){2-6}
                     & W4A4  & 327.01 & 327.40 & 224.41 & 275.63  \\
    \midrule
    QuaRTZ (Ours)    & W4A4   & 7.11 & 14.81 & 23.53 & 14.71 \\
    \bottomrule
  \end{tabular}
  }
  \label{tab:ldm_quantization_results}
\end{table}

\paragraph{Unconditional Image Generation}

We first evaluate our method on unconditional image generation using LDM and report the results in Table~\ref{tab:ldm_quantization_results}.
With the W4A4G16 setting, our method achieves substantial quality improvements over the baseline, narrowing the gap with the W4A8 settings by small margins.
We can observe that the direct quantization from 32-bit precision to 4-bit using TFMQ-DM leads to a significant degradation in generation quality across all cases.

\begin{table}[htbp]
  \centering
  \caption{Quantization results for UNet backbone diffusion model on text-to-image generation task with 4-bit quantization.}
  \resizebox{0.67\textwidth}{!}{
  \begin{tabular}{@{}l l l ccc@{}}
    \toprule
    \multirow{2}{*}{\textbf{Model}} & \multirow{2}{*}{\textbf{Methods}} & \multirow{2}{*}{\textbf{Bits (W$x$A$y$)}} & \multicolumn{3}{c}{\textbf{MS-COCO}} \\
    \cmidrule(lr){4-6}
    & & & FID $\downarrow$ & CLIP $\uparrow$  & IR $\uparrow$ \\
    \midrule
    \multirow{6}{*}{SDv1.4} & Full Prec. & W32A32 & 25.03 & 0.265 & 0.189 \\
    \cmidrule(lr){2-6}
    & TFMQ-DM     & W4A6 & 230 & 0.127 & - \\
    & DGQ         & W4A6 & 43.66 & 0.263 & - \\
    \cmidrule(lr){2-6}
    & QuaRTZ (Ours)         & \textbf{W4A4} & \textbf{37.64} & \textbf{0.264} & 0.065 \\
    \midrule
    \multirow{6}{*}{SDXL-Turbo} & Full Prec. & W32A32 & 30.74 & 0.265 & 0.850 \\
    \cmidrule(lr){2-6}
    & TFMQ-DM    & W4A6 & 270.00 & 0.022 & - \\
    & DGQ        & W4A6 & 45.00  & 0.245 & - \\
    \cmidrule(lr){2-6}
    & SVDQuant  & W4A4 & \textbf{24.60} & - & 0.816 \\
    & QuaRTZ (Ours) & W4A4 & 30.86 & \textbf{0.265} & \textbf{0.833} \\
    \bottomrule
  \end{tabular}
  }
  \label{tab:coco_results}
\end{table}

\begin{table}[htbp]
  \centering
  \caption{Quantization results for DiT backbone diffusion model on text-to-image generation task with 4-bit quantization.}
  \resizebox{\textwidth}{!}{
  \begin{tabular}{@{}l l l cccc cccc@{}}
    \toprule
    \multirow{2}{*}{\textbf{Model}} & \multirow{2}{*}{\textbf{Methods}} & \multirow{2}{*}{\textbf{Bits (W$x$A$y$)}} & \multicolumn{4}{c}{\textbf{MJHQ}} & \multicolumn{4}{c}{\textbf{sDCI}} \\
    \cmidrule(lr){4-7} \cmidrule(lr){8-11}
    & & & FID$\downarrow$ & IR $\uparrow$ & LPIPS $\downarrow$ & PSNR $\uparrow$ & FID$\downarrow$ & IR $\uparrow$ & LPIPS $\downarrow$ & PSNR $\uparrow$ \\
    \midrule
    \multirow{4}{*}{PixArt-$\Sigma$} & Full Prec. & W16A16 & 16.61 & 0.953 & - & - & 24.88 & 0.963 & - & - \\
    \cmidrule(lr){2-11}
    & Naïve INT4$^\dagger$    & W4A4 & 206.33 & -1.24 & 0.762 & 9.08 & 229.00   & -1.28 & 0.761 & 8.71 \\
    & SVDQuant$^\dagger$      & W4A4 & \textbf{16.1} & \textbf{0.875} & \textbf{0.321} & \textbf{17.61}  & \textbf{16.74} & \textbf{0.91}  & \textbf{0.354} & \textbf{16.37} \\
    & QuaRTZ (Ours)           & W4A4 & 27.89 & 0.841 & 0.460 & 15.08 & 27.51 & 0.873 & 0.492 & 14.02 \\
    \midrule
    \multirow{4}{*}{FLUX.1-schnell} & Full Prec. & W16A16 & 19.2 & 0.966 & - & - & 20.88 & 0.974 & - & - \\
    \cmidrule(lr){2-11}
    & Naïve INT4$^\dagger$    & W4A4 & 9.13 &  0.963 & 0.345 & 16.31 & 8.51 & 0.988 & 0.353 & 15.27 \\
    & SVDQuant$^\dagger$      & W4A4 & 7.07 & 0.958 & 0.257 & 18.25  & 6.67 & 0.976 & \textbf{0.26} & \textbf{17.19} \\
    & QuaRTZ (Ours)           & W4A4 & \textbf{6.98} & \textbf{0.962} & \textbf{0.254} & \textbf{18.27} & \textbf{6.56} & \textbf{0.987} & 0.258 & 17.16 \\
    \bottomrule
  \end{tabular}
  }
  \label{tab:dit_results}
\end{table}

\paragraph{Text-to-Image Generation}

We report quantitative results for 4-bit quantization across several diffusion architectures—SDv1.4, SDXL-Turbo, PixArt-$\Sigma$, and FLUX.1-schnell—in Table~\ref{tab:ldm_quantization_results} and Table~\ref{tab:dit_results}.
The number of inference steps is set to 50, 4, 25, and 4, respectively.
For UNet-based architectures, our method consistently surpasses W4A6 baselines across all metrics, despite operating at lower precision.
Notably, it slightly outperforms SVDQuant on FLUX.1-schnell without requiring auxiliary high-precision branches.
Compared to naïve INT4 quantization, the proposed two-stage scheme that preserves LSBs yields a substantial accuracy improvement.

On SDXL-Turbo and PixArt-$\Sigma$, however, our method shows notable degradation.
We attribute this to the error-compensation module in SVDQuant, which explicitly addresses outlier precision, whereas our design prioritizes maintaining LSB fidelity alongside coarse outlier magnitude.
This suggests that combining our approach with targeted outlier compensation, such as QwT \citep{fu2025quantization} or SVDQuant, may further improve robustness in a low-bit quantization scheme.

\begin{figure}[h!]
    \centering
    \includegraphics[width=\linewidth]{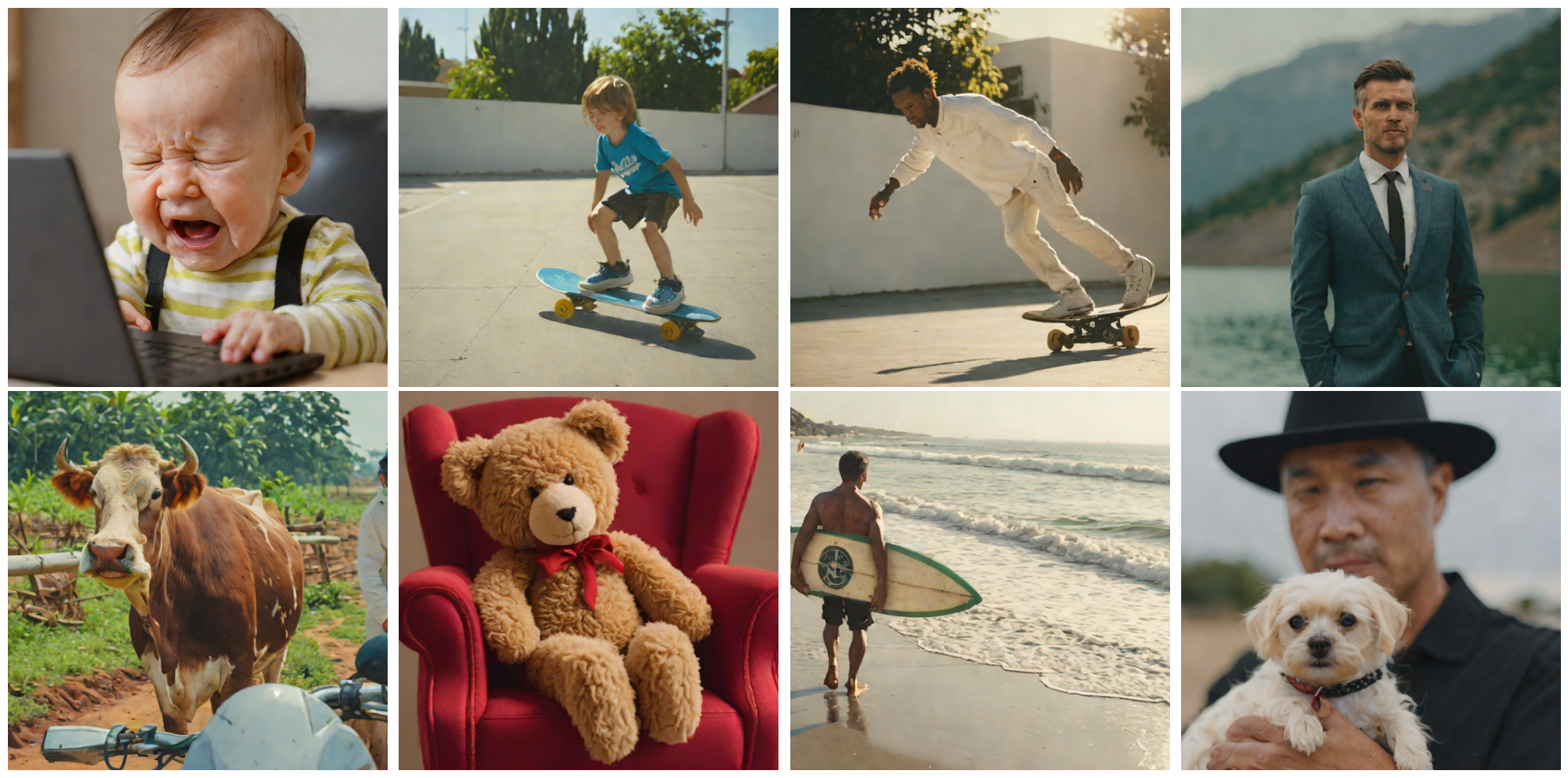}
    \caption{Random samples from SDXL using the COCO dataset with QuaRTZ INT4 setting.}
\end{figure}

\subsection{Ablation Study}

\begin{figure}[htbp]
  \centering
  \begin{minipage}[h]{0.43\linewidth}
    \centering
    \includegraphics[width=\linewidth]{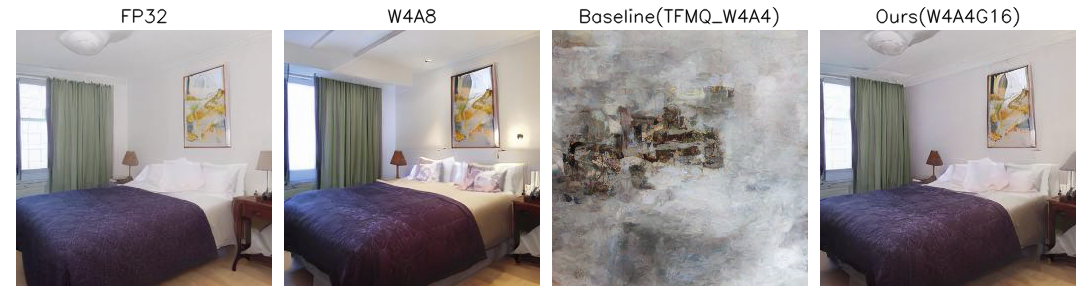}\\[0.5em]
    \includegraphics[width=\linewidth]{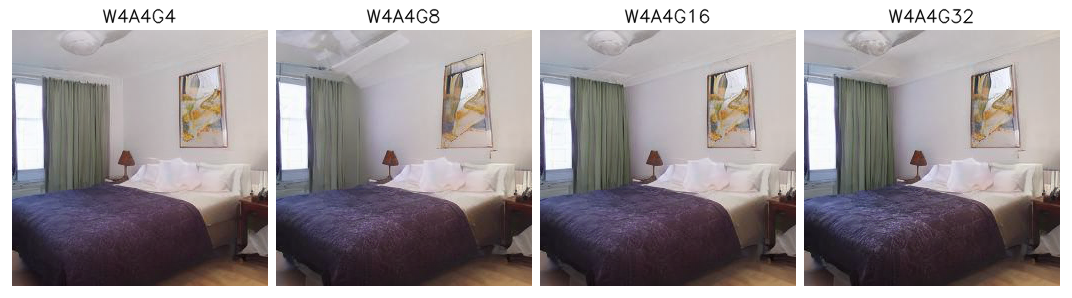}
    \subcaption{}
  \end{minipage}%
  \hfill
  \begin{minipage}[h]{0.52\linewidth}
    \centering
    \includegraphics[width=\linewidth]{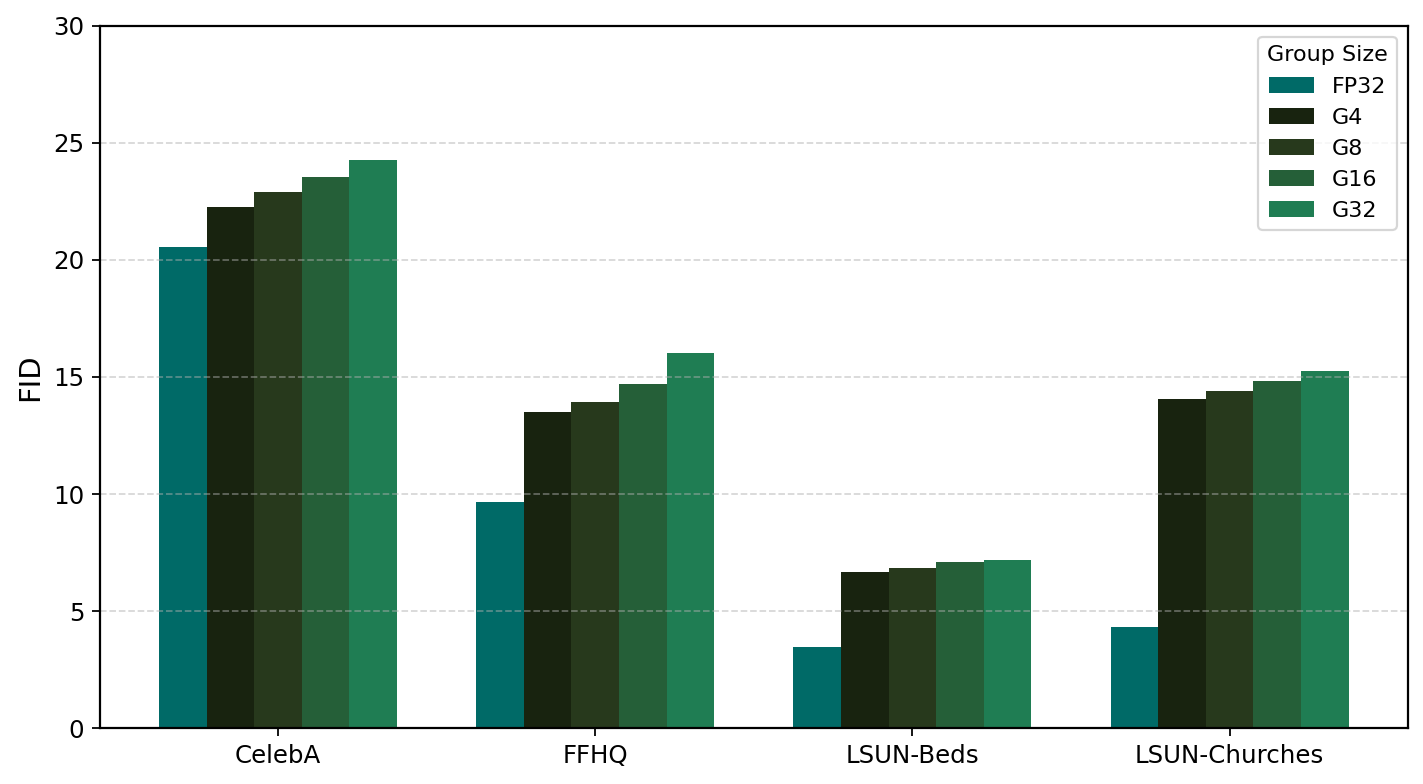}
    \subcaption{}
  \end{minipage}
  \caption{(a) (up) Qualitative comparison with baseline models with LDM-4 on the LSUN-Bedrooms dataset. (down) Generated results using different group sizes using QuaRTZ W4A4 with LDM-4 on the LSUN-Bedrooms dataset. (b) FID scores with different group size using LDM-4 model quantized using QuaRTZ.}
  \label{fig:group_size}
\end{figure}

Group size controls the granularity of the LZS operation, creating a trade-off between representation precision and inference latency.
We evaluate this effect under the W4A4 setting on the LDM-4 model with the LSUN-Bedrooms dataset.
As shown in Figure~\ref{fig:group_size}(a), FID score increases approximately linearly as group size increases.
This behavior is expected since larger groups increase the likelihood of outliers, which in turn forces truncation in the LSB region.
Nevertheless, the visual quality remains stable, suggesting that group size can be tuned according to deployment requirements without significant perceptual degradation.
Following the result of Table~\ref{tab:latency}, we recommend using group size of $16$ or $32$ where latency and image quality are well balanced.

\subsection{Potential Applications to LLMs}

\begin{table}[t]
\centering
\caption{Perplexity comparison between FP16 and QuaRTZ-quantized models. Lower is better.}
\resizebox{0.8\textwidth}{!}{
\begin{tabular}{lccccc}
\toprule
Model & \#Params & FP16 $\downarrow$ & QuaRTZ 4-bit $\downarrow$ & $\Delta$ & Relative $\Delta$ \\
\midrule
Qwen2 & 0.5B  & 12.35  & 13.67 & +1.32 & +10.7\% \\
Qwen2 & 1.5B  & 8.87   & 9.37  & +0.50 & +5.6\%  \\
Qwen2 & 7B    & 6.67   & 6.98  & +0.30 & +4.5\%  \\
LLaMA2 & 7B   & 5.12   & 5.35  & +0.23 & +4.5\%  \\
LLaMA3 & 8B   & 5.75   & 6.30  & +0.55 & +9.5\%  \\
\bottomrule
\end{tabular}
}
\label{tab:perplexity_results}
\end{table}

We further explore the applicability of QuaRTZ to Large Language Models (LLMs). 
Here, 16-bit activations are compressed directly to 4-bit with group size $G_s=8$. 
As shown in Table~\ref{tab:perplexity_results}, QuaRTZ-quantized models closely follow their FP16 baselines across scales. 
The increase in perplexity remains modest, with relative error between $+4.5\%$ and $+10.7\%$. 
These preliminary results suggest that LSB preservation is also beneficial for autoregressive transformers, 
and demonstrate the potential of QuaRTZ as a general low-bit quantization scheme beyond diffusion models.

\section{Conclusion}
\label{sec:conclusion}

This paper introduces QuaRTZ, a novel two-stage PTQ framework that achieves successful 4-bit quantization of diffusion models.
We argue that preserving LSBs is as important as capturing outliers.
Our method addresses both challenges by applying a two-stage quantization-then-suppression approach, minimizing rounding errors for LSBs while retaining outlier magnitudes.
Our theoretical analysis indicates that our method outperforms conventional 4-bit quantization, particularly under distributions with high LSB density such as Gaussian and Laplacian. 
This theoretical advantage is corroborated by extensive empirical evaluations, which demonstrate superior performance across a variety of diffusion models and tasks.
Notably, our method achieves an FID of 6.98 in a W4A4 setting for the FLUX.1-schnell model, surpassing the state-of-the-art W4A4 model.

\bibliography{iclr2026_conference}

\begin{thebibliography}{31}
\providecommand{\natexlab}[1]{#1}
\providecommand{\url}[1]{\texttt{#1}}
\expandafter\ifx\csname urlstyle\endcsname\relax
  \providecommand{\doi}[1]{doi: #1}\else
  \providecommand{\doi}{doi: \begingroup \urlstyle{rm}\Url}\fi

\bibitem[Black-Forest-Labs(2024)]{flux}
Black-Forest-Labs.
\newblock Flux.1, 2024.
\newblock URL \url{https://blackforestlabs.ai}.

\bibitem[Chen et~al.(2024{\natexlab{a}})Chen, Ge, Xie, Wu, Yao, Ren, Wang, Luo, Lu, and Li]{chen2024pixart}
Junsong Chen, Chongjian Ge, Enze Xie, Yue Wu, Lewei Yao, Xiaozhe Ren, Zhongdao Wang, Ping Luo, Huchuan Lu, and Zhenguo Li.
\newblock Pixart-$\sigma$: Weak-to-strong training of diffusion transformer for 4k text-to-image generation.
\newblock In \emph{European Conference on Computer Vision}, pp.\  74--91. Springer, 2024{\natexlab{a}}.

\bibitem[Chen et~al.(2024{\natexlab{b}})Chen, Huang, and Chen]{chen2024stepbaq}
Yi-Chung Chen, Zhi-Kai Huang, and Jing-Ren Chen.
\newblock Stepbaq: Stepping backward as correction for quantized diffusion models.
\newblock \emph{Advances in Neural Information Processing Systems}, 37:\penalty0 54054--54078, 2024{\natexlab{b}}.

\bibitem[Esser et~al.(2019)Esser, McKinstry, Bablani, Appuswamy, and Modha]{esser2019learned}
Steven~K Esser, Jeffrey~L McKinstry, Deepika Bablani, Rathinakumar Appuswamy, and Dharmendra~S Modha.
\newblock Learned step size quantization.
\newblock \emph{arXiv preprint arXiv:1902.08153}, 2019.

\bibitem[Fu et~al.(2025)Fu, Yu, Shao, Zhou, Zhu, and Wu]{fu2025quantization}
Minghao Fu, Hao Yu, Jie Shao, Junjie Zhou, Ke~Zhu, and Jianxin Wu.
\newblock Quantization without tears.
\newblock In \emph{Proceedings of the Computer Vision and Pattern Recognition Conference}, pp.\  4462--4472, 2025.

\bibitem[Han et~al.(2015)Han, Mao, and Dally]{han2015deep}
Song Han, Huizi Mao, and William~J Dally.
\newblock Deep compression: Compressing deep neural networks with pruning, trained quantization and huffman coding.
\newblock \emph{arXiv preprint arXiv:1510.00149}, 2015.

\bibitem[He et~al.(2023)He, Liu, Liu, Wu, Zhou, and Zhuang]{he2023ptqd}
Yefei He, Luping Liu, Jing Liu, Weijia Wu, Hong Zhou, and Bohan Zhuang.
\newblock Ptqd: Accurate post-training quantization for diffusion models.
\newblock \emph{arXiv preprint arXiv:2305.10657}, 2023.

\bibitem[Hessel et~al.(2021)Hessel, Holtzman, Forbes, Bras, and Choi]{hessel2021clipscore}
Jack Hessel, Ari Holtzman, Maxwell Forbes, Ronan~Le Bras, and Yejin Choi.
\newblock Clipscore: A reference-free evaluation metric for image captioning.
\newblock \emph{arXiv preprint arXiv:2104.08718}, 2021.

\bibitem[Heusel et~al.(2017)Heusel, Ramsauer, Unterthiner, Nessler, and Hochreiter]{fid}
Martin Heusel, Hubert Ramsauer, Thomas Unterthiner, Bernhard Nessler, and Sepp Hochreiter.
\newblock Gans trained by a two time-scale update rule converge to a local nash equilibrium.
\newblock \emph{Advances in neural information processing systems}, 30, 2017.

\bibitem[Ho et~al.(2020)Ho, Jain, and Abbeel]{ho2020denoising}
Jonathan Ho, Ajay Jain, and Pieter Abbeel.
\newblock Denoising diffusion probabilistic models.
\newblock \emph{Advances in neural information processing systems}, 33:\penalty0 6840--6851, 2020.

\bibitem[Hu et~al.(2022)Hu, Shen, Wallis, Allen-Zhu, Li, Wang, Wang, Chen, et~al.]{hu2022lora}
Edward~J Hu, Yelong Shen, Phillip Wallis, Zeyuan Allen-Zhu, Yuanzhi Li, Shean Wang, Lu~Wang, Weizhu Chen, et~al.
\newblock Lora: Low-rank adaptation of large language models.
\newblock \emph{ICLR}, 1\penalty0 (2):\penalty0 3, 2022.

\bibitem[Huang et~al.(2024)Huang, Gong, Liu, Chen, and Liu]{huang2024tfmq}
Yushi Huang, Ruihao Gong, Jing Liu, Tianlong Chen, and Xianglong Liu.
\newblock Tfmq-dm: Temporal feature maintenance quantization for diffusion models.
\newblock In \emph{Proceedings of the IEEE/CVF Conference on Computer Vision and Pattern Recognition}, pp.\  7362--7371, 2024.

\bibitem[Karras et~al.(2018)Karras, Aila, Laine, and Lehtinen]{karras2017progressive}
Tero Karras, Timo Aila, Samuli Laine, and Jaakko Lehtinen.
\newblock Progressive growing of gans for improved quality, stability, and variation.
\newblock 2018.

\bibitem[Karras et~al.(2019)Karras, Laine, and Aila]{karras2019style}
Tero Karras, Samuli Laine, and Timo Aila.
\newblock A style-based generator architecture for generative adversarial networks.
\newblock In \emph{Proceedings of the IEEE/CVF Conference on Computer Vision and Pattern Recognition (CVPR)}, pp.\  4401--4410. IEEE, 2019.

\bibitem[Li et~al.(2024{\natexlab{a}})Li, Kamko, Akhgari, Sabet, Xu, and Doshi]{li2024playground}
Daiqing Li, Aleks Kamko, Ehsan Akhgari, Ali Sabet, Linmiao Xu, and Suhail Doshi.
\newblock Playground v2. 5: Three insights towards enhancing aesthetic quality in text-to-image generation.
\newblock \emph{arXiv preprint arXiv:2402.17245}, 2024{\natexlab{a}}.

\bibitem[Li et~al.(2024{\natexlab{b}})Li, Lin, Zhang, Cai, Li, Guo, Xie, Meng, Zhu, and Han]{li2024svdquant}
Muyang Li, Yujun Lin, Zhekai Zhang, Tianle Cai, Xiuyu Li, Junxian Guo, Enze Xie, Chenlin Meng, Jun-Yan Zhu, and Song Han.
\newblock Svdquant: Absorbing outliers by low-rank components for 4-bit diffusion models.
\newblock \emph{arXiv preprint arXiv:2411.05007}, 2024{\natexlab{b}}.

\bibitem[Li et~al.(2023)Li, Liu, Lian, Yang, Dong, Kang, Zhang, and Keutzer]{li2023q}
Xiuyu Li, Yijiang Liu, Long Lian, Huanrui Yang, Zhen Dong, Daniel Kang, Shanghang Zhang, and Kurt Keutzer.
\newblock Q-diffusion: Quantizing diffusion models.
\newblock In \emph{Proceedings of the IEEE/CVF International Conference on Computer Vision}, pp.\  17535--17545, 2023.

\bibitem[Li et~al.(2021)Li, Gong, Tan, Yang, Hu, Zhang, Yu, Wang, and Gu]{li2021brecq}
Yuhang Li, Ruihao Gong, Xu~Tan, Yang Yang, Peng Hu, Qi~Zhang, Fengwei Yu, Wei Wang, and Shi Gu.
\newblock Brecq: Pushing the limit of post-training quantization by block reconstruction.
\newblock \emph{arXiv preprint arXiv:2102.05426}, 2021.

\bibitem[Lin et~al.(2014)Lin, Maire, Belongie, Hays, Perona, Ramanan, Doll{\'a}r, and Zitnick]{lin2014microsoft}
Tsung-Yi Lin, Michael Maire, Serge Belongie, James Hays, Pietro Perona, Deva Ramanan, Piotr Doll{\'a}r, and C~Lawrence Zitnick.
\newblock Microsoft coco: Common objects in context.
\newblock In \emph{Computer vision--ECCV 2014: 13th European conference, zurich, Switzerland, September 6-12, 2014, proceedings, part v 13}, pp.\  740--755. Springer, 2014.

\bibitem[Nagel et~al.(2020)Nagel, Amjad, Van~Baalen, Louizos, and Blankevoort]{nagel2020up}
Markus Nagel, Rana~Ali Amjad, Mart Van~Baalen, Christos Louizos, and Tijmen Blankevoort.
\newblock Up or down? adaptive rounding for post-training quantization.
\newblock In \emph{International conference on machine learning}, pp.\  7197--7206. PMLR, 2020.

\bibitem[{NVIDIA Corporation}(2025)]{nvidia:cuda_math}
{NVIDIA Corporation}.
\newblock \emph{CUDA Math API}, 2025.
\newblock URL \url{https://docs.nvidia.com/cuda/cuda-math-api/cuda_math_api/group__CUDA__MATH__INTRINSIC__INT.html#_CPPv45__clzi}.
\newblock Accessed: 2025-09-24.

\bibitem[Peebles \& Xie(2023)Peebles and Xie]{peebles2023dit}
William Peebles and Saining Xie.
\newblock Scalable diffusion models with transformers.
\newblock In \emph{Proceedings of the IEEE/CVF conference on computer vision and pattern recognition}, pp.\  4195--4205, 2023.

\bibitem[Podell et~al.(2023)Podell, English, Lacey, Blattmann, Dockhorn, M{\"u}ller, Penna, and Rombach]{podell2023sdxl}
Dustin Podell, Zion English, Kyle Lacey, Andreas Blattmann, Tim Dockhorn, Jonas M{\"u}ller, Joe Penna, and Robin Rombach.
\newblock Sdxl: Improving latent diffusion models for high-resolution image synthesis.
\newblock \emph{arXiv preprint arXiv:2307.01952}, 2023.

\bibitem[Rombach et~al.(2022)Rombach, Blattmann, Lorenz, Esser, and Ommer]{rombach2022high}
Robin Rombach, Andreas Blattmann, Dominik Lorenz, Patrick Esser, and Bj{\"o}rn Ommer.
\newblock High-resolution image synthesis with latent diffusion models.
\newblock In \emph{Proceedings of the IEEE/CVF conference on computer vision and pattern recognition}, pp.\  10684--10695, 2022.

\bibitem[Ryu et~al.(2025)Ryu, Park, and Shim]{ryu2025dgq}
Hyogon Ryu, NaHyeon Park, and Hyunjung Shim.
\newblock Dgq: Distribution-aware group quantization for text-to-image diffusion models.
\newblock \emph{arXiv preprint arXiv:2501.04304}, 2025.

\bibitem[Sauer et~al.(2024)Sauer, Lorenz, Blattmann, and Rombach]{sauer2024adversarial}
Axel Sauer, Dominik Lorenz, Andreas Blattmann, and Robin Rombach.
\newblock Adversarial diffusion distillation.
\newblock In \emph{European Conference on Computer Vision}, pp.\  87--103. Springer, 2024.

\bibitem[Shang et~al.(2023)Shang, Yuan, Xie, Wu, and Yan]{shang2023ptqdm}
Yuzhang Shang, Zhihang Yuan, Bin Xie, Bingzhe Wu, and Yan Yan.
\newblock Post-training quantization on diffusion models.
\newblock In \emph{CVPR}, 2023.

\bibitem[Urbanek et~al.(2024)Urbanek, Bordes, Astolfi, Williamson, Sharma, and Romero-Soriano]{urbanek2024picture}
Jack Urbanek, Florian Bordes, Pietro Astolfi, Mary Williamson, Vasu Sharma, and Adriana Romero-Soriano.
\newblock A picture is worth more than 77 text tokens: Evaluating clip-style models on dense captions.
\newblock In \emph{Proceedings of the IEEE/CVF Conference on Computer Vision and Pattern Recognition}, pp.\  26700--26709, 2024.

\bibitem[Xu et~al.(2023)Xu, Liu, Wu, Tong, Li, Ding, Tang, and Dong]{xu2023imagereward}
Jiazheng Xu, Xiao Liu, Yuchen Wu, Yuxuan Tong, Qinkai Li, Ming Ding, Jie Tang, and Yuxiao Dong.
\newblock Imagereward: Learning and evaluating human preferences for text-to-image generation.
\newblock \emph{Advances in Neural Information Processing Systems}, 36:\penalty0 15903--15935, 2023.

\bibitem[Yu et~al.(2015)Yu, Seff, Zhang, Song, Funkhouser, and Xiao]{yu2015lsun}
Fisher Yu, Ari Seff, Yinda Zhang, Shuran Song, Thomas Funkhouser, and Jianxiong Xiao.
\newblock Lsun: Construction of a large-scale image dataset using deep learning with humans in the loop.
\newblock \emph{arXiv preprint arXiv:1506.03365}, 2015.

\bibitem[Zhang et~al.(2018)Zhang, Isola, Efros, Shechtman, and Wang]{zhang2018unreasonable}
Richard Zhang, Phillip Isola, Alexei~A Efros, Eli Shechtman, and Oliver Wang.
\newblock The unreasonable effectiveness of deep features as a perceptual metric.
\newblock In \emph{Proceedings of the IEEE conference on computer vision and pattern recognition}, pp.\  586--595, 2018.

\end{thebibliography}
\bibliographystyle{iclr2026_conference}

\appendix

\section{Quantization Error Upper Bound Derivation}
\label{appendix:error_derivation}

\paragraph{Setup.}
Let $X\in\mathbb{R}$ with PDF $p(x)$. Consider symmetric uniform (midtread) $n$-bit quantizers with step $s_n$ and reconstruction levels $\{q_k\}$ that cover the same dynamic range for $n=4,8$. Then
\[
E_q^n=\sum_{k}\int_{\mathcal{B}_k} p(x)\,|x-q_k|\,dx \le \frac{s_n}{2}, 
\qquad E_q^4\le \frac{s_4}{2},\quad E_q^8\le \frac{s_8}{2}.
\]
Since the ranges match, $s_4=16\,s_8$ (7 magnitude bits at 8-bit vs. 3 at 4-bit).

\paragraph{Signed 8-bit reformat and LZS.}
Quantize $X$ to signed int8:
\[
x_q = \operatorname{sgn}(x)\cdot m\, s_8,\qquad m\in\{0,1,\dots,127\}.
\]
We represent each code as $[\text{sign}]\,[\text{7-bit magnitude}]$. LZS keeps the sign bit and \emph{only the top 3 magnitude bits}. Define the magnitude bit-length
\[
H(m)=
\begin{cases}
0,& m=0,\\[2pt]
\lfloor \log_2 m\rfloor+1,& m\ge 1,
\end{cases}
\qquad H(m)\in\{0,1,\dots,7\}.
\]
If $H(m)\le 3$, all magnitude bits are retained and no truncation occurs. If $H(m)\ge 4$, the lower $H(m)-3$ magnitude bits are discarded. With truncation toward zero, the additional magnitude error (in LSB units of the 8-bit grid) is bounded by
\[
E_{\text{LZS}}(m)=
\begin{cases}
0,& H(m)\le 3,\\[2pt]
\big(2^{\,H(m)-3}-1\big)\, s_8, & H(m)\ge 4,
\end{cases}
\]
and the sign is preserved, so there is no sign error.

\paragraph{Total error bound and dominance condition.}
Let $E_{\text{total}}$ be the total error of the signed-LZS path (int8 quantization plus LZS truncation). By triangle inequality,
\[
E_{\text{total}} \le E_q^8 + \mathbb{E}\!\left[E_{\text{LZS}}\right].
\]
A sufficient condition for the signed-LZS path to beat naïve signed 4-bit is
\[
E_{\text{total}} < E_q^4 
\;\;\Leftarrow\;\;
\mathbb{E}\!\left[E_{\text{LZS}}\right] < E_q^4 - E_q^8 \le \frac{s_4-s_8}{2} 
= \frac{16s_8-s_8}{2}=7.5\,s_8.
\]

\paragraph{Expected LZS error under the signed magnitude distribution.}
Let $M\in\{0,\dots,127\}$ be the magnitude index from signed 8-bit quantization, and $H=\!H(M)$. Define $P_k=\mathbb{P}(H=k)$ for $k\in\{0,\dots,7\}$. Then
\[
\mathbb{E}\!\left[E_{\text{LZS}}\right]
= s_8 \sum_{k=4}^{7} P_k\big(2^{\,k-3}-1\big).
\]
Therefore a sufficient condition is
\[
\sum_{k=4}^{7} P_k\big(2^{\,k-3}-1\big) < 7.5.
\]
Equivalently, in terms of bins of the 8-bit \emph{magnitude} quantizer, note that $H(m)\ge 4$ iff $m\ge 8$. Writing
\[
P_k=\sum_{\;m:\,H(m)=k}\;\int_{x\in \mathrm{bin}(m)}\! \big(p(x)+p(-x)\big)\,dx,
\]
we get the explicit bound
\[
\sum_{m=8}^{127}\big(2^{\,H(m)-3}-1\big)\!\int_{x\in \mathrm{bin}(m)} \big(p(x)+p(-x)\big)\,dx \;<\; 7.5.
\]

\paragraph{Worst case reduction.}
In the worst case all mass with $m\ge 8$ sits at $H(m)=7$ (4 truncated bits), so $2^{H-3}-1=15$ and
\[
15 \sum_{m=8}^{127}\int_{x\in \mathrm{bin}(m)} \big(p(x)+p(-x)\big)\,dx \;\le\; 7.5
\quad\Rightarrow\quad
\int_{\{|x_q| \ge 8\,s_8\}}\! p(x)\,dx \;<\; \tfrac{1}{2}.
\]
That is, if less than half of the probability mass is quantized into magnitude indices $m\ge 8$ (i.e., values whose 7-bit magnitudes require $\ge3$ bits), then signed-LZS satisfies $\mathbb{E}[E_{\text{LZS}}]<7.5\,s_8$ and the total error is guaranteed below naïve signed 4-bit’s upper bound.

\section{Entropy of 4-bit representation}

\begin{figure}[htbp]
  \centering
  \begin{subfigure}{\textwidth}
       \includegraphics[width=\linewidth]{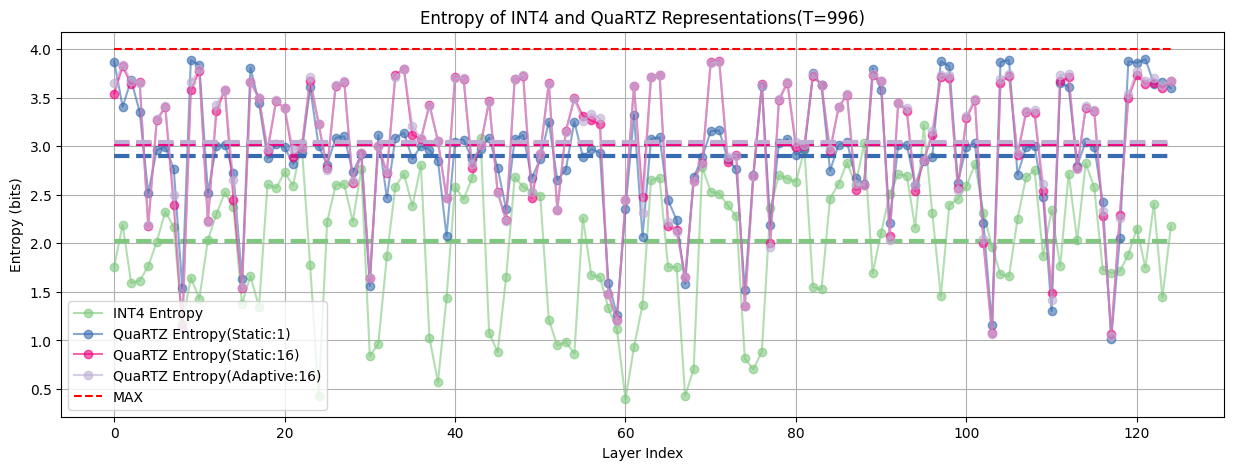}
       \caption{Entropy analysis at timestep 996.}
       \label{fig:entropy_per_layer_996}
   \end{subfigure}
   \rule{\textwidth}{0.4pt}
   \begin{subfigure}{\textwidth}
       \includegraphics[width=\linewidth]{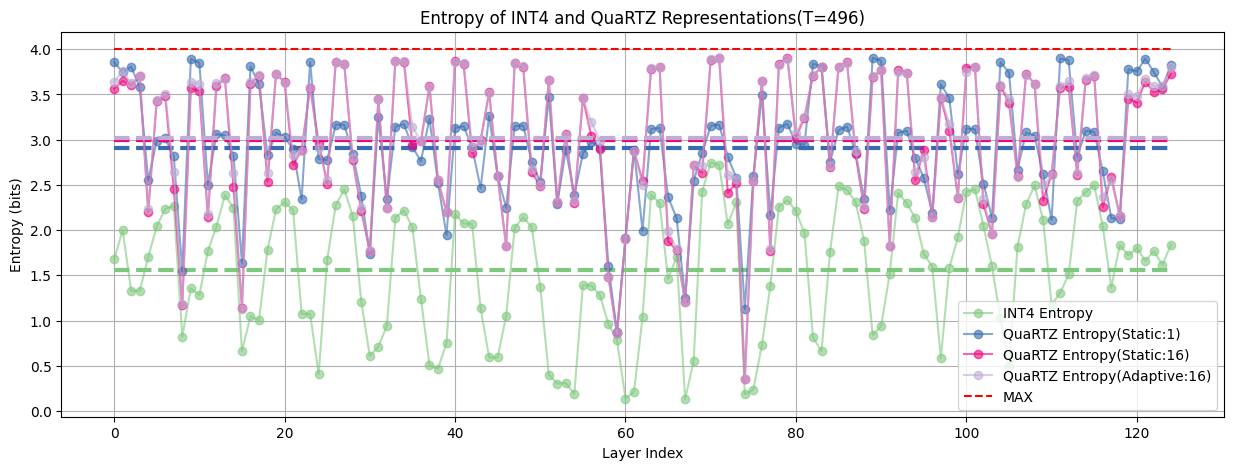}
       \caption{Entropy analysis at timestep 496.}
       \label{fig:entropy_per_layer_496}
   \end{subfigure}
   \rule{\textwidth}{0.4pt}
   \begin{subfigure}{\textwidth}
       \includegraphics[width=\linewidth]{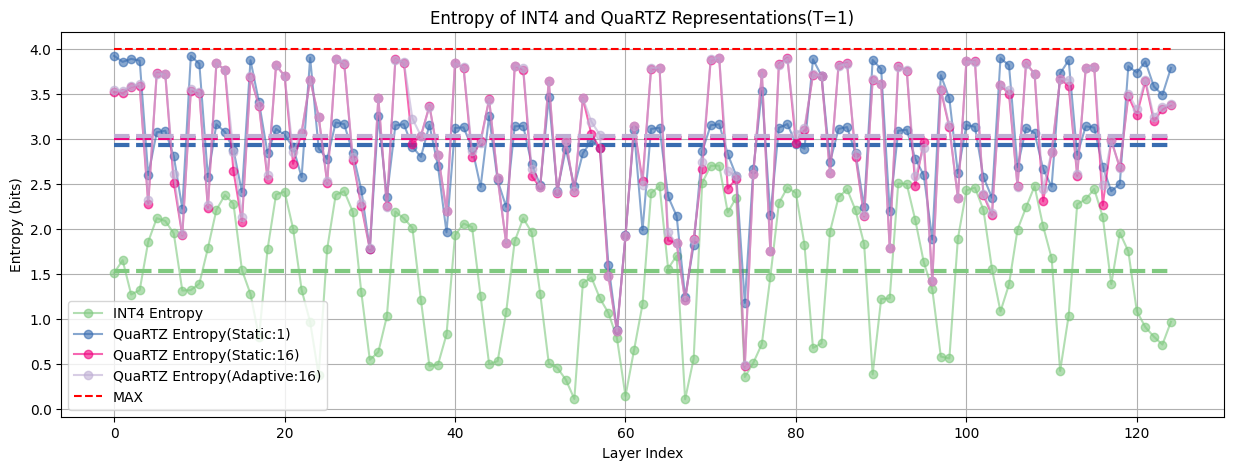}
       \caption{Entropy analysis at timestep 1.}
       \label{fig:entropy_per_layer_1}
   \end{subfigure}
   \caption{Visualization of 4-bit entropy of quantized values using naïve INT4 min-max uniform quantization and our QuaRTZ method on LDM4 trained on LSUN-Bedrooms averaged at given timestep.}
  \label{fig:entropy_per_layer}
\end{figure}

\begin{figure}[htbp]
  \centering
  \includegraphics[width=\textwidth]{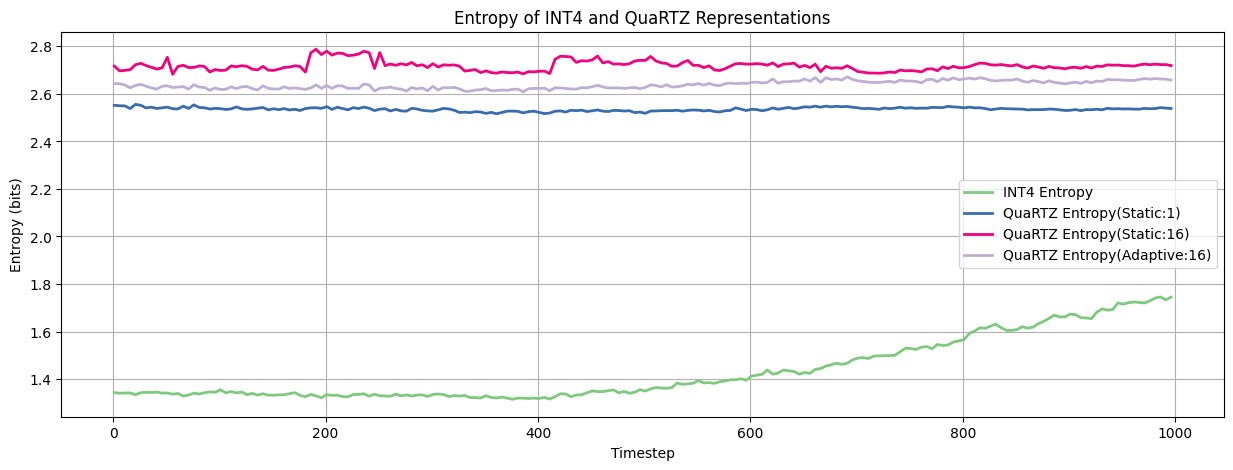}
\caption{Visualization of 4-bit entropy of quantized values using naïve INT4 min-max uniform quantization and our QuaRTZ method on LDM4 trained on LSUN-Bedrooms averaged over all layers.}
  \label{fig:entropy_per_ts}
\end{figure}

We compared the entropy of 4-bit representations of activations at each layer in Figure \ref{fig:entropy_per_ts} and Figure \ref{fig:entropy_per_layer}.
For every layer, our QuaRTZ has higher entropy compared to INT4 min-max uniform quantization method.
Higher entropy indicates that all four bits are activated with near-equal frequency, thus better utilizing 4 bits to store information.

\section{Experimental Details}
\label{appendix:quant_detail}
For LDMs, we use per-channel weight quantization and static per-tensor activation quantization.
To create 8-bit representation, we kept consistent to TFMQ-DM \citep{huang2024tfmq} for fair comparison regarding layer selection.
Once we acquire the 8-bit representation, 4-bit compression is applied on-the-fly.
For SDv1.4, SDXL-Turbo, PixArt-$\Sigma$, and FLUX.1-schnell, we follow the setting with SVDQuant\citep{li2024svdquant}.
Weights and activations are quantized groupwise with a size of 64 with 16-bit scales, then GPTQ is applied to the weights.
We note that we do not use smoothing or auxiliary error compensation module.

\section{Hardware efficiency of QuaRTZ kernel}
\label{appendix:hardware}

\begin{table}[!h]
    \centering
    \caption{Comparison of power and area for MAC units}
    \renewcommand{\arraystretch}{1.2}
    \begin{tabular}{l|cccc}
        \toprule
        \textbf{ } & FP $16 \times 16$ & INT $16 \times 8$ & INT $8 \times 8$ & INT $4 \times 4$ \\
        \textbf{ } & MAC & MAC & MAC &  Proposed \\
        \midrule
        \multicolumn{5}{l}{\textbf{Area ($\mu m^2$)}} \\
        \midrule
        Multiplier      & 3042.2 & 1052.2 & 559.4 & 112 \\
        Shifter         & 0      & 0      & 0     & 156.5 \\
        Reg. + Accm.    & 1127.1 & 631    & 431   & 385.3 \\
        \midrule
        Total           & 4169.3 & 1683.2 & 990.4 & 653.8 \\
        \midrule
        \multicolumn{5}{l}{\textbf{Power ($mW$)}} \\
        \midrule
        Multiplier      & 0.3378 & 0.0506 & 0.023 & 0.0028 \\
        Shifter         & 0      & 0      & 0     & 0.0067 \\
        Reg. + Accm.    & 0.1242 & 0.0733 & 0.0581 & 0.0451 \\
        \midrule
        Total           & 0.4620 & 0.1239 & 0.0811 & 0.0546 \\
        \bottomrule
    \end{tabular}
\end{table}

\begin{table}[!h]
  \centering
    \caption{Latency comparison of 4-bit QuaRTZ kernel and PyTorch baseline across various attention settings.}
    \begin{tabular}{lccc}
        \toprule
        heads$\times$dim & Group & PyTorch (ms) & QuaRTZ (ms) \\
        \midrule
        \multirow{3}{*}{32$\times$128} & g8  & 0.653 & 0.105 \\
                                     & g16 & 0.531 & 0.102 \\
                                     & g32 & 0.525 & 0.092 \\
        \midrule
        \multirow{3}{*}{40$\times$128} & g8  & 0.805 & 0.103 \\
                                     & g16 & 0.555 & 0.104 \\
                                     & g32 & 0.502 & 0.096 \\
        \midrule
        \multirow{3}{*}{64$\times$128} & g8  & 0.749 & 0.107 \\
                                     & g16 & 0.515 & 0.102 \\
                                     & g32 & 0.519 & 0.097 \\
        \bottomrule
    \end{tabular}
  \label{tab:latency_attention}
\end{table}

\section{Qualitative Results}

We report random samples from baseline method and our methods on randomly sampled results from each dataset: LSUN-Bedroom (Figure~\ref{fig:viz_lsun_beds}), LSUN-Churches (Figure~\ref{fig:viz_lsun_churches}), CelebA-HQ (Figure~\ref{fig:viz_celeba}), FFHQ (Figure~\ref{fig:viz_ffhq}), and MS-COCO (Figure~\ref{fig:viz_sd_coco}).

\section*{LLM Usage}
We used an AI-based assistant (ChatGPT) solely for minor language editing and polishing.
All research ideas, experimental design, and analyses were conducted by the authors.

\begin{figure}[htbp]
  \centering
  \begin{subfigure}{\textwidth}
       \includegraphics[width=\linewidth]{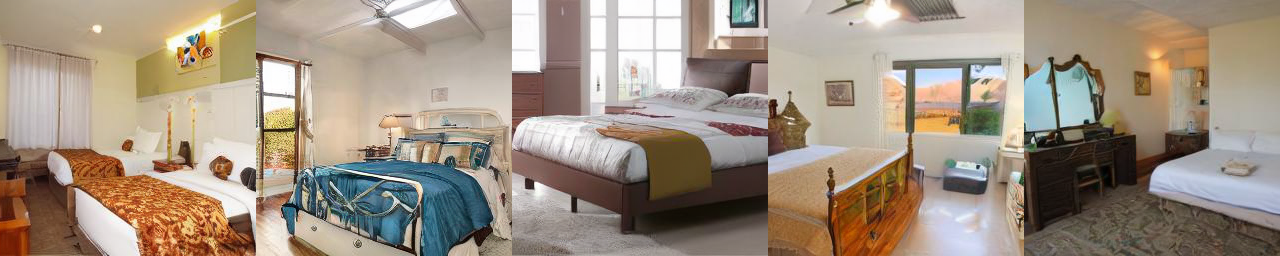}
       \caption{FP32}
   \end{subfigure}
   \hspace{3em}
   \begin{subfigure}{\textwidth}
       \includegraphics[width=\linewidth]{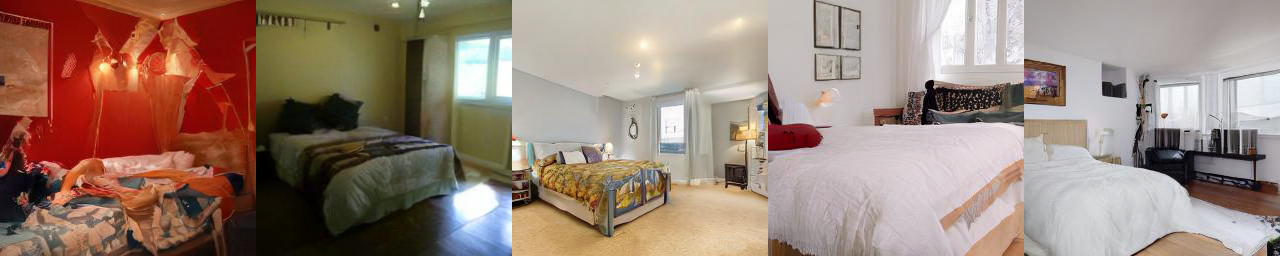}
       \caption{TFMQ-DM W4A8}
   \end{subfigure}
   \hspace{3em}
   \begin{subfigure}{\textwidth}
       \includegraphics[width=\linewidth]{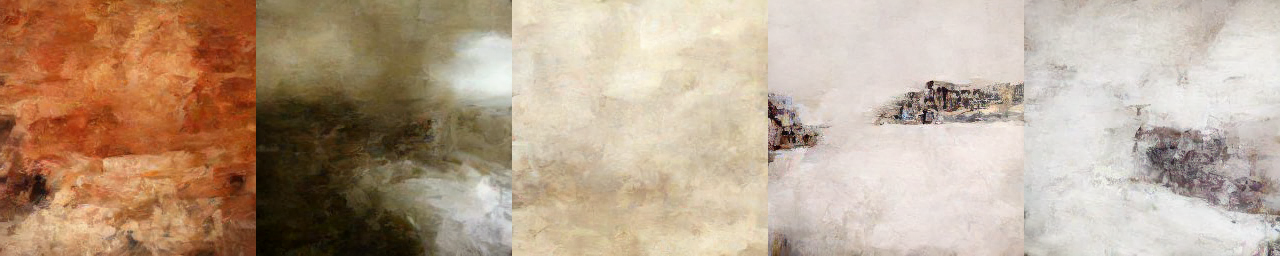}
       \caption{TFMQ-DM W4A4}
   \end{subfigure}
   \hspace{3em}
   \begin{subfigure}{\textwidth}
       \includegraphics[width=\linewidth]{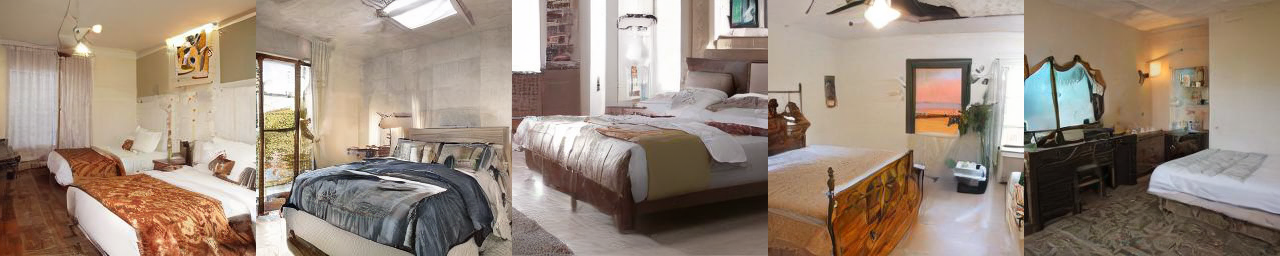}
       \caption{QuaRTZ(Ours) W4A4}
   \end{subfigure}
   \caption{Random samples from LDM trained on LSUN-Bedroom dataset.}
  \label{fig:viz_lsun_beds}
\end{figure}

\begin{figure}[htbp]
  \centering
  \begin{subfigure}{\textwidth}
       \includegraphics[width=\linewidth]{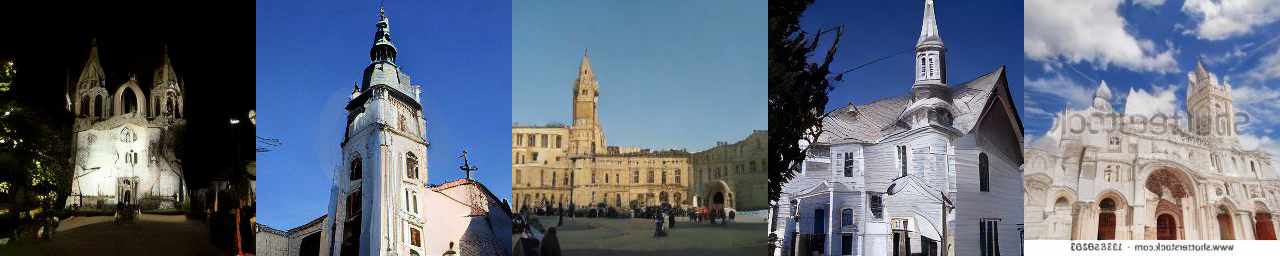}
       \caption{FP32}
   \end{subfigure}
   \hspace{3em}
   \begin{subfigure}{\textwidth}
       \includegraphics[width=\linewidth]{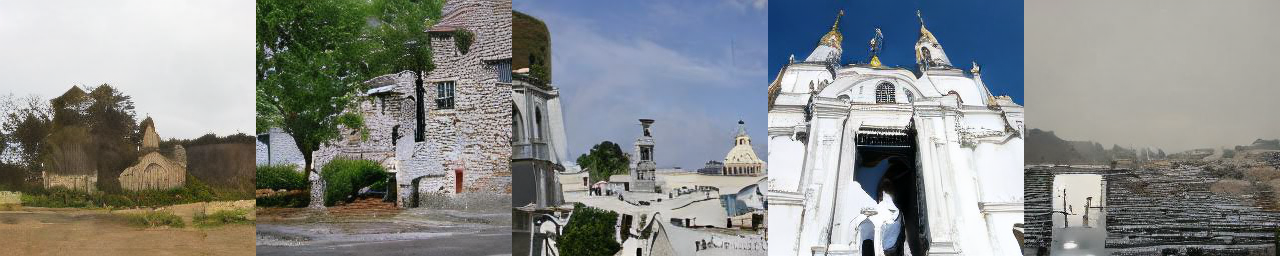}
       \caption{TFMQ-DM W4A8}
   \end{subfigure}
   \hspace{3em}
   \begin{subfigure}{\textwidth}
       \includegraphics[width=\linewidth]{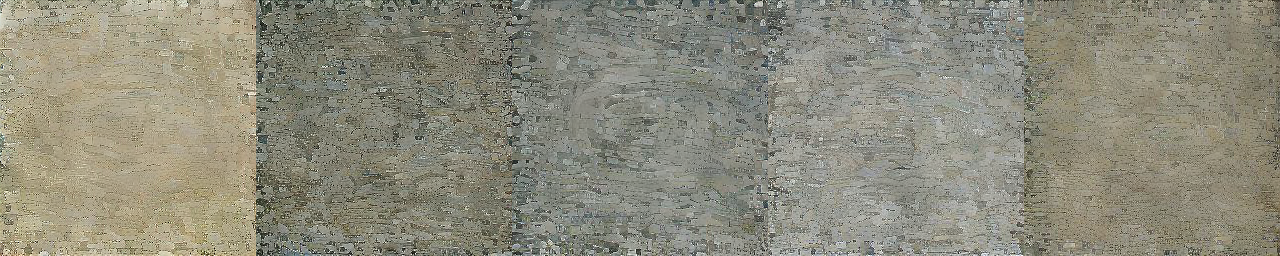}
       \caption{TFMQ-DM W4A4}
   \end{subfigure}
   \hspace{3em}
   \begin{subfigure}{\textwidth}
       \includegraphics[width=\linewidth]{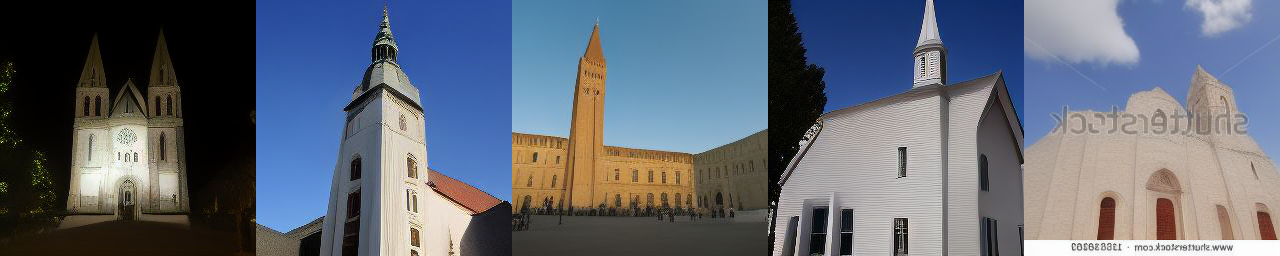}
       \caption{QuaRTZ(Ours) W4A4}
   \end{subfigure}
   \caption{Random samples from LDM trained on LSUN-Churches dataset.}
  \label{fig:viz_lsun_churches}
\end{figure}

\begin{figure}[htbp]
  \centering
  \begin{subfigure}{\textwidth}
       \includegraphics[width=\linewidth]{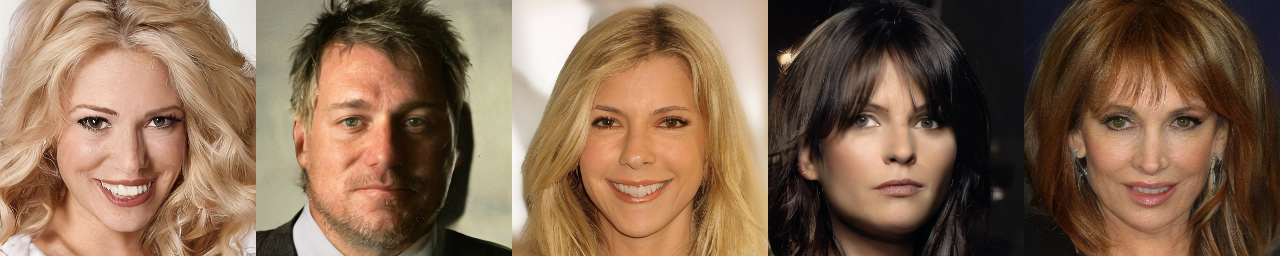}
       \caption{FP32}
   \end{subfigure}
   \hspace{3em}
   \begin{subfigure}{\textwidth}
       \includegraphics[width=\linewidth]{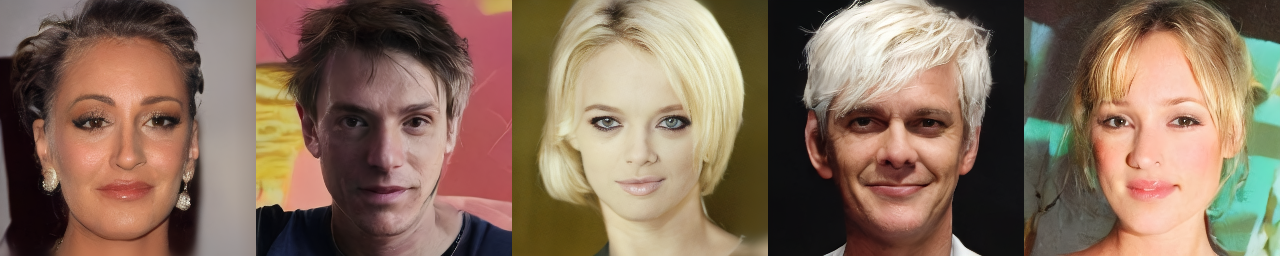}
       \caption{TFMQ-DM W4A8}
   \end{subfigure}
   \hspace{3em}
   \begin{subfigure}{\textwidth}
       \includegraphics[width=\linewidth]{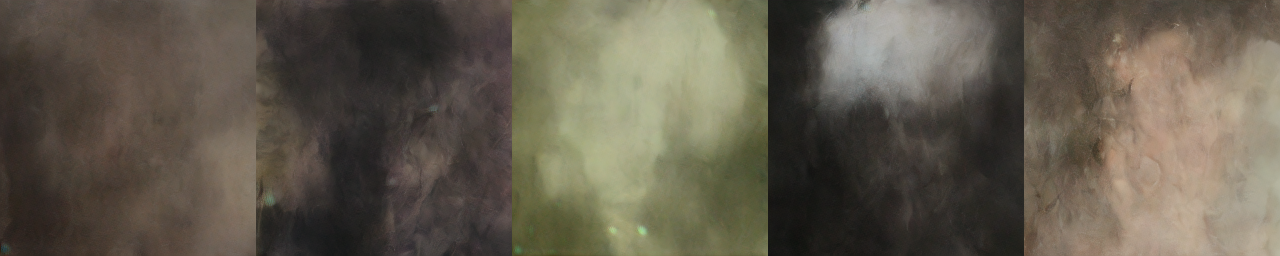}
       \caption{TFMQ-DM W4A4}
   \end{subfigure}
   \hspace{3em}
   \begin{subfigure}{\textwidth}
       \includegraphics[width=\linewidth]{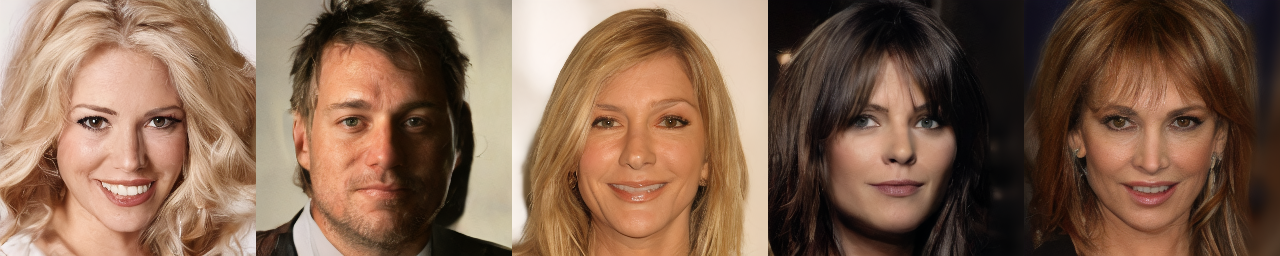}
       \caption{QuaRTZ(Ours) W4A4}
   \end{subfigure}
   \caption{Random samples from LDM trained on CelebA-HQ dataset.}
  \label{fig:viz_celeba}
\end{figure}

\begin{figure}[htbp]
  \centering
  \begin{subfigure}{\textwidth}
       \includegraphics[width=\linewidth]{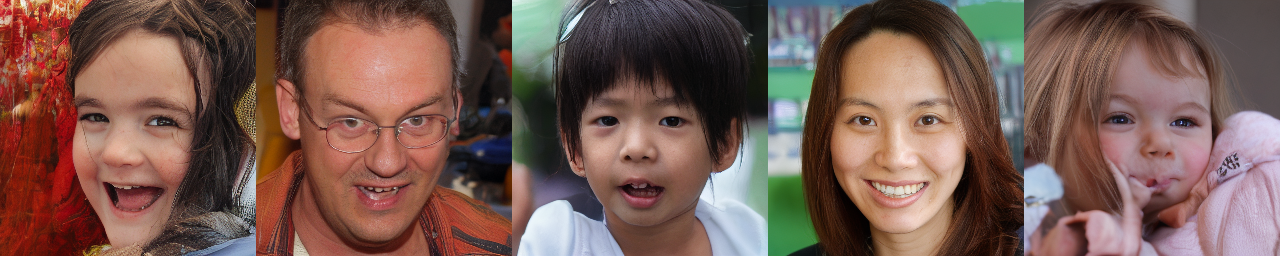}
       \caption{FP32}
   \end{subfigure}
   \hspace{3em}
   \begin{subfigure}{\textwidth}
       \includegraphics[width=\linewidth]{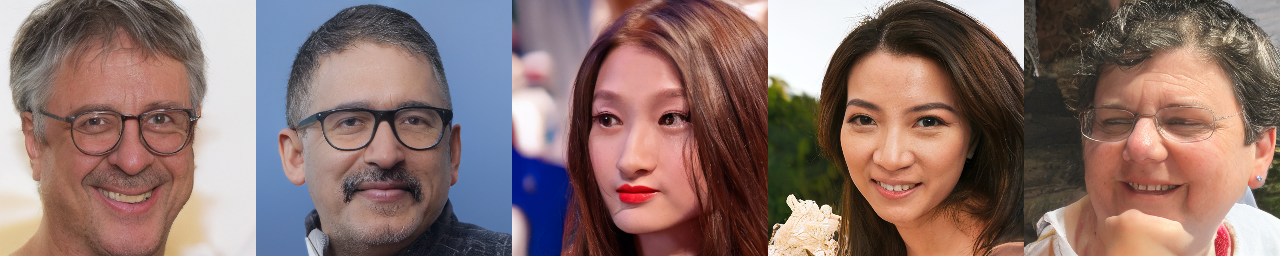}
       \caption{TFMQ-DM W4A8}
   \end{subfigure}
   \hspace{3em}
   \begin{subfigure}{\textwidth}
       \includegraphics[width=\linewidth]{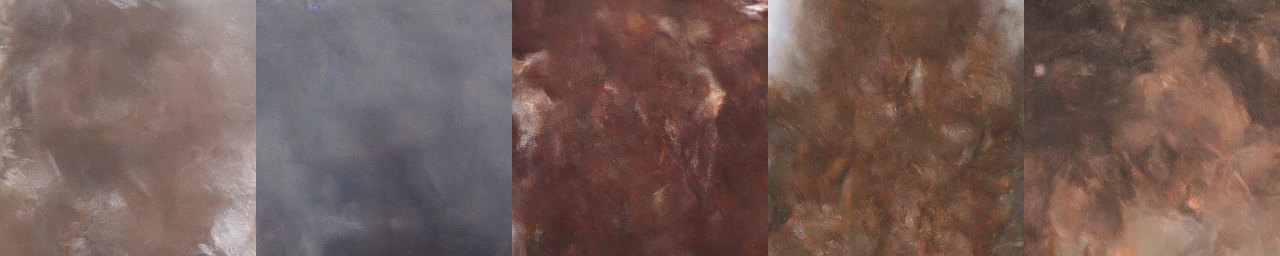}
       \caption{TFMQ-DM W4A4}
   \end{subfigure}
   \hspace{3em}
   \begin{subfigure}{\textwidth}
       \includegraphics[width=\linewidth]{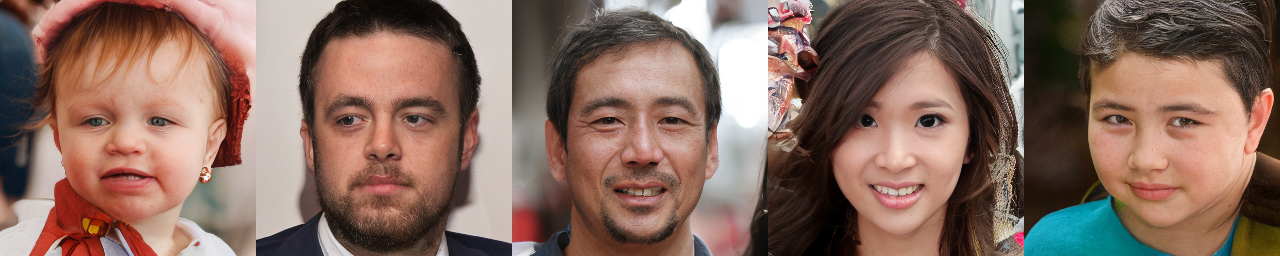}
       \caption{QuaRTZ(Ours) W4A4}
   \end{subfigure}
   \caption{Random samples from LDM trained on FFHQ dataset.}
  \label{fig:viz_ffhq}
\end{figure}

\begin{figure}[htbp]
    \centering
    \includegraphics[width=\linewidth]{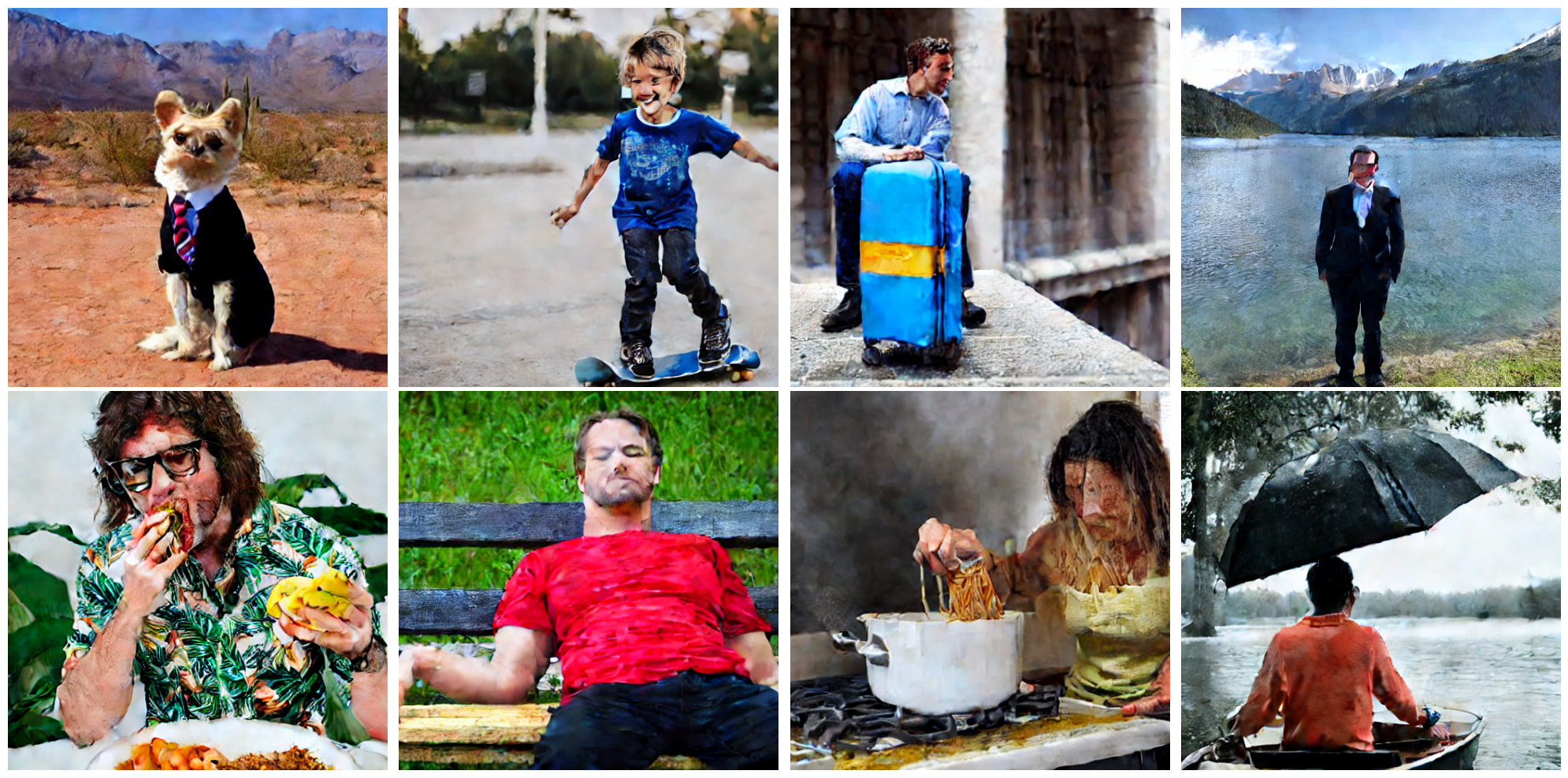}
    \caption{Random samples from Stable Diffusion v1.4 on COCO dataset with QuaRTZ INT4 setting.}
    \label{fig:viz_sd_coco}
\end{figure}

\end{document}